\declaretheorem[name=Theorem,refname={Theorem,Theorems},Refname={Theorem,Theorems}]{theorem}
\declaretheorem[name=Lemma,refname={Lemma,Lemmas},Refname={Lemma,Lemmas},sibling=theorem]{lemma}
\declaretheorem[name=Definition,refname={Definition,Definitions},Refname={Definition,Definitions},sibling=theorem]{definition}
\title{Optimal Design for Adaptive In-Context Prompt Tuning in Large Language Models}
\author{%
  Subhojyoti Mukherjee\thanks{Work conducted during an internship at Amazon.} \\
  ECE Department\\
  UW-Madison\\
  Wisconsin, Madison \\
  \texttt{smukherjee27@wisc.edu} \\
  \and
  Anusha Lalitha \\
  AWS AI Labs\\
  Santa Clara\\
  USA \\
  \and
  Aniket Deshmukh \\
  AWS AI Labs\\
  Santa Clara\\
  USA \\
  \and
  Ge Liu \\
  AWS AI Labs\\
  Santa Clara\\
  USA \\
  \and
  Yifei Ma \\
  AWS AI Labs\\
  Santa Clara\\
  USA \\
  \and
  Branislav Kveton \\
  AWS AI Labs\\
  Santa Clara\\
  USA 
}
\begin{document}

\maketitle

\begin{abstract}
One emergent ability of large language models (\LLMs) is that query-specific examples can be included in the prompt at inference time. In this work, we use active learning for adaptive prompt design and call it \textbf{A}ctive \textbf{I}n-context \textbf{P}rompt \textbf{D}esign (\aicl). We design the \LLM\ prompt by adaptively choosing few-shot examples from a training set to optimize performance on a test set. The training examples are initially unlabeled and we obtain the label of the most informative ones, which maximally reduces uncertainty in the \LLM\ prediction. We propose two algorithms, \go\ and \sal, which differ in how the few-shot examples are chosen. We analyze these algorithms in linear models: first \go\ and then use its equivalence with \sal. We experiment with many different tasks in small, medium-sized, and large language models; and show that \go\ and \sal\ outperform other methods for choosing few-shot examples in the \LLM\ prompt at inference time.
\end{abstract}

\section{Introduction}
\label{sec:intro}
Large language models (\LLMs), such as Vicuna \citep{vicuna2023}, Falcon-40B  \citep{refinedweb}, and OpenLLaMA \citep{touvron2023llama} are applied in mainly two ways: fine-tuning and prompt tuning. In fine-tuning, the \LLM\ weights are adapted to a downstream task \citep{devlin2018bert}. Fine-tuning can easily incorporate domain knowledge that a pre-trained model may not possess and resembles classic inductive inference. Fine-tuned models often do not need carefully designed prompts, which makes them easier to deploy. The main drawback of fine-tuning is that it can be costly, because tens of thousands of training examples may be needed to fine-tune billions of parameters of the \LLM\ \citep{ding2023parameter}. In prompt tuning, the \LLM\ weights are fixed and the \LLM\ is given query-specific examples at inference time that affect its output \citep{lester2021power}. This ability to conduct in-context inference is one of the emergent abilities of \LLMs. Prompt tuning does not require large training sets. It is also preferred when query-specific examples are private or change over time, and thus can only be utilized at inference time. 

Prior works on prompt tuning mainly focus on hard prompts, which are carefully handcrafted to get the desired output. This can be time-consuming and fragile, as minor prompt modifications can lead to a significant performance drop on the downstream task \citep{suzgun2022challenging}. In contrast, \citet{zhang2022active, zhang2022automatic} and \citet{diao2023active} explored adaptive prompt design using clustering-based and uncertainty-reducing approaches. While these approaches offer some benefits, we argue that optimal designs \citep{pukelsheim2006optimal, fedorov2013theory} can outperform them by effectively balancing uncertainty and diversity. Similarly to \citet{zhang2022active, zhang2022automatic} and \citet{diao2023active}, we propose a framework for adaptive prompt design called \emph{active in-context prompt design (\aicl)}. The key idea is to design the \LLM\ prompt by adaptively choosing few-shot examples for a set of test examples at inference time. The examples are initially unlabeled and we obtain labels for the most informative ones, which maximally reduce the uncertainty in the \LLM\ prediction for all test examples. We assume that the observed labels are collected from experts (human-in-the-loop) or revealed by an oracle \citep{dasgupta05analysis, hanneke2014theory}. The focus on informativeness and diversity ensures efficient label acquisition by selecting the best examples. This reduces reliance on limited and costly resources such as expert labeling.

One motivating example for our work is \emph{theme recognition}, where the goal is to identify a unifying theme for a set of items (e.g.,  movies, grocery items, or books) provided by the user. For example, let the test query be a triplet of movie titles \say{Lion King}, \say{Jungle Book}, and \say{Tarzan}, and the goal is that the \LLM\ should infer a plausible common theme such as \say{Disney animated movies}, \say{Children's movies}, or \say{Movies with deep connections with nature}. This task is challenging due to the inherent ambiguity and many plausible themes. 
To address this, we can give the \LLM\ a few informative examples of triplets of movies and their common themes as training examples in context that can guide it towards the correct theme for the test query. 
This inherently requires a human-in-the-loop who can go over the set of triplets of movies and label their common theme for each example which can be costly. Hence, it is critical to narrow down to a few informative examples from exponentially many training examples possible for vast amounts of data like movies. 
Finally note that by exposing the \LLM\ to these training examples, we refine its understanding of the task, improve handling of ambiguity, and thus improve its ability to identify the common theme for unseen test examples. 
To address the above challenges, we propose a framework for adaptive prompt design called active in-context prompt design (\aicl). Our framework is general and can be easily extended to any active supervised-learning task, like active regression \citep{gao2011active} and active classification \citep{gao2011active}. At a high level, we treat the \LLM\ as a general inference machine \citep{brown20language, mirchandani2023large} that is shown adaptively-chosen examples with labels at inference time. The \LLM\ then utilizes them to answer any set of related test examples. The key idea is to choose the next example to label such that we maximally reduce the estimated uncertainty of the answer to the test examples. We focus on designing algorithms with the following two properties: \textbf{(1)} Implementable in any \LLM\ that can be queried efficiently. The parameters of the \LLM\ do not change or have to be observed. \textbf{(2)} Analyzable in simple models. In this work, we use linear models to motivate and analyze our algorithms. 

We now state the main contributions of our work:

\textbf{(1)} We propose a \textbf{G}-\textbf{O}ptimal design algorithm (\go). The key idea in \go\ is to retrieve the examples to label that are closest to the set of test examples in the inference task.
Our main contribution is the right notion of closeness, based on posterior covariance in a simpler model. \go\ is implementable with any \LLM\ that can be sampled from, and does not require access to model parameters, feature embeddings of the \LLM, or its gradients. 

\textbf{(2)} We propose a \textbf{S}imulation-Based \textbf{A}ctive \textbf{L}earning algorithm (\sal).  \sal\ uses simulation to estimate the impact of labeling unlabeled examples on uncertainty of the example in the inference task. \sal\ is also implementable with any \LLM\ that can be sampled from. 

\textbf{(3)} \go\ is motivated by optimal designs in linear models \citep{kiefer1960equivalence,pukelsheim2006optimal}. This allows us to analyze \go\ in linear models (\Cref{thm:go}). Our proof is a major departure from similar analyses in fixed-budget best-arm identification in bandits \citep{azizi22fixedbudget,yang22minimax}, for instance because we directly analyze the discrete allocation problem and each unlabeled example can be labeled at most once. We discuss this in detail right after \Cref{thm:go}. \sal\ is more general than \go\ because it does not make any linear model assumption in its design. We show that \sal\ and \go\ are equivalent in linear models in \Cref{thm:sal}.

\textbf{(4)} We evaluate \go\ and \sal\ on UCI \citep{uci} and OpenML \citep{OpenML2013} regression and classification tasks, custom NLP datasets, abstract reasoning corpus (ARC) tasks \citep{alford2021neurosymbolic, mirchandani2023large}, and Probabilistic Context Free Grammar (PCFG) tasks \citep{hupkes2020compositionality}. \go\ and \sal\ consistently outperform other active prompt tuning methods \citep{zhang2022active, zhang2022automatic, diao2023active} for choosing few-shot examples in majority of the tasks. 

We advance the understanding of active in-context prompt design in \LLMs\ and develop a practical methodology for adaptive prompt design. To our knowledge, this is the first paper that analyzes optimal design based prompting approaches that correctly balance uncertainty and diversity-based sampling as opposed to other existing adaptive prompting-based approaches \citep{zhang2022active, zhang2022automatic, diao2023active}.

This paper is organized as follows. \cref{sec:setting} introduces the problem setting. \cref{sec:algorithms} presents our methods and discusses their properties. \cref{sec:analysis} is devoted to analyzing our methods. \cref{sec:expt} validates our approach empirically. We review related work in detail in \cref{sec:related work}. Finally, \cref{sec:conclusions} summarizes our contributions and suggests avenues for future work.

\vspace*{-1em}
\section{Setting}
\label{sec:setting}
We pose the problem of adaptive prompt design as active learning. We adopt the following standard active learning terminology \citep{lewis1995sequential,tong2001support,dasgupta05analysis, dasgupta2007general, hanneke2014theory}. We have a $d$-dimensional \emph{feature space} $\cX \subset \R^d$ and a $d_y$-dimensional \emph{label space} $\cY \subseteq \realset^{d_y}$. A labeled example is a pair $(\bx, Y) \in \cX \times \cY$. The feature vectors and labels are related as $Y = f(\bx, \btheta_*) + \varepsilon$, where $f$ is an underlying model, $\btheta_*$ is its parameter, and $\varepsilon$ is an \emph{independent zero-mean noise vector}. Our goal is to learn to estimate $f$ on test examples by labeling training examples. 
We have a budget $T$ on the maximum number of training examples that can be labeled. This constraint can arise due to multiple reasons. For instance, human labels may be necessary and they are naturally costly. Another reason may be that the machine learning model has a limited capacity for including labeled examples, such as the length of prompts in \LLMs\ \citep{zhang2022active,zhang2022automatic,diao2023active}.

Now we introduce our notation in detail. Denote $[m] = \{1,2,\ldots,m\}$. We have $n$ \emph{training examples} $\cXt = \{\bx_1, \ldots, \bx_n \}$ and $K$ \emph{test examples} $\cXs = \set{\bx_{*, 1}, \dots, \bx_{*, K}}$. We assume that both sets are related, such as being sampled from the same distribution. 
The label of the test example $\bx_{*, k}$ is $Y_{*, k}$. In our motivating theme recognition example the training examples $\bx_i$ and test example $\bx_{*,k}$ is a concatenation of triplets of movies, and the label $Y_i$ or $Y_{*,k}$ is the common theme amongst the triplets respectively.
We want to infer $Y_{*, k}$ for all $k\in [K]$ without explicitly modeling the complex function $f$. We model the function using an \LLM\ which we treat as a general inference machine because of its large representation capacity \citep{brown2020language, mirchandani2023large}. Specifically, let $H_t = \{(X_\ell, Y_\ell)\}_{\ell \in [t - 1]}$ be a set of $t - 1$ previously labeled examples, where $X_\ell \in \cXt$ is the $\ell$-th labeled example and $Y_\ell$ is its label. Then we denote by $p(\cdot \mid \bx, H_t)$ the distribution over labels of an \LLM\ for a queried example $\bx$ when $H_t$ is used as few-shot in-context examples. To implement this in the \LLM, we simply concatenate $\bx$ and $H_t$ in context \citep{zhang2022active, zhang2022automatic, diao2023active}. We know that in-context examples affect the distribution of responses of an \LLM\ \citep{xie2021explanation, suzgun2022challenging, deng2023efficient, lee2023supervised}. So, the problem of learning $f$ under a budget $T$ can be viewed as selecting $H_{T + 1}$ such that $p(Y_{*, k} \mid \bx_{*, k}, H_{T + 1})$ is high for all test examples $k \in [K]$. This problem is challenging, especially when the training examples need to be labeled.

To effectively reduce the uncertainty of $Y_{*, k} \mid \bx_{*, k}, H_{T + 1}$, we need to quantify it. One possibility it to use the entropy $- \mathbb{E}_{Y_{*, k} \sim p(\cdot \mid \bx_{*, k}, H_{T + 1})}[\log p(Y_{*, k} \mid \bx_{*, k}, H_{T + 1})]$. This is problematic because the entropy is hard to estimate for high-dimensional random variables \citep{vershynin2020high}, especially without having access to $p(\cdot \mid \bx_{*, k}, H_{T + 1})$ beyond sampling from it. This is a shortcoming of recent adaptive prompting techniques \citep{zhang2022active, zhang2022automatic, diao2023active}. Therefore, we propose using the covariance of $Y_{*, k} \mid \bx_{*, k}, H_{T + 1}$ as the uncertainty measure. Specifically, we measure the uncertainty of the $k$-th test example by $\trace(\mathrm{cov}[Y_{*, k} \mid \bx_{*, k}, H_{T + 1}])$ and the uncertainty over all test examples by $\max_{k \in [K]} \trace(\mathrm{cov}[Y_{*, k} \mid \bx_{*, k}, H_{T + 1}])$. Since the trace of the covariance is the sum of the variances in individual dimensions, our objective can be interpreted as minimizing the maximum variance over the predicted labels of all test examples. This is a natural measure of uncertainty in linear models and corresponding optimal designs \citep{pukelsheim2006optimal, fedorov2013theory}.

Before we present our algorithms, we wanted to outline their general design. Given a budget $T$, we design sequential adaptive algorithms over $T$ rounds, where the example $X_t \in \cXt$ in round $t \in [T]$ is chosen as a function of $H_t = \{(X_\ell, Y_\ell)\}_{\ell \in [t - 1]}$ up to that round. Since $H_t$ summarizes past actions of the algorithm, we call it a \emph{history}. The label of example $X_t$ is $Y_t = f(X_t, \btheta_*) + \varepsilon_t$, where $\varepsilon_t$ is an independent zero-mean noise vector in round $t$. Our objective is to minimize the maximum uncertainty over all test examples, $\max_{k \in [K]} \trace(\mathrm{cov}[Y_{*, k} \mid \bx_{*, k}, H_{T + 1}])$.

\vspace*{-1em}
\section{Algorithms}
\label{sec:algorithms}
\begin{algorithm}[t]
  \caption{G-optimal design (\go)}
  \label{alg:go}
  \begin{algorithmic}[1]
    \STATE \textbf{Input:} Training set $\cXt = \{\bx_{i}\}_{i=1}^n $, test set $\cXs=\{\bx_{*,k}\}_{k=1}^K $, budget $T$
    \STATE $\cL_1 \gets \emptyset, \ \cU_1 \gets [n], \ H_1 \gets \{\}$
    \FOR{$t = 1, \dots, T$}
      \STATE $I_t = \argmin_{i \in \cU_t} \max_{k \in [K]} \bx_{*, k}\T
      \left(\wSigma_t^{-1} + \bx_i \bx_i\T\right)^{-1} \bx_{*, k}\,$
      \STATE $X_t \gets \bx_{I_t} \in \cXt$
      \STATE Observe label $Y_t$ of example $X_t$
      \STATE $\cL_{t + 1} \gets \cL_t \cup \{I_t\}, \
      \cU_{t + 1} \gets \cU_t \setminus \{I_t\}$
      \STATE $H_{t + 1} \gets H_t \cup \{(X_t, Y_t)\}$
    \ENDFOR
    \STATE \textbf{Output:} Sample $Y_{*,k} \sim p(\cdot \mid \bx_{*,k }, H_{T + 1})$ for $k \in [K]$
  \end{algorithmic}
\end{algorithm}

In this section, we introduce our active learning algorithms for selecting most informative training examples from $\cXt$. To simplify notation, we assume scalar labels and then discuss an extension to vector labels at the end of the section. We also let $\cL_t \subseteq [n]$ and $\cU_t \subseteq [n]$ be the indices of all labeled and unlabeled training examples up to round $t$, respectively. Note that $\cL_t \cup \cU_t = [n]$.

\subsection{Optimal Design Algorithm}
\label{sec:go}

The key idea is to label examples in $\cXt$ that minimize the maximum uncertainty of predictions over all test examples $\bx_{*, k}$. Our computation of uncertainty is borrowed from linear models. Specifically, take a linear model $Y = \bx\T \btheta_* + \varepsilon$, where $\btheta_* \in \mathbb{R}^d$ is its parameter and $\varepsilon \sim \cN(0, \sigma^2)$ is independent noise. Suppose that $\btheta_\ast \sim \cN(\btheta_0, \bSigma_0)$. Then a well-known result in Bayesian statistics \citep{bishop06pattern} is that the posterior variance of the model estimate at an example $\bx_{*, k}$ given labeled examples $H_t$ is $\bx_{*, k}\T \widehat{\bSigma}_t \bx_{*, k}$, where $\wSigma_t = (\bSigma_0^{-1} + \sigma^{-2} \sum_{\ell = 1}^{t - 1} X_\ell X_\ell\T)^{-1}$ is the posterior covariance of $\btheta_* \mid H_t$. Therefore, the maximum uncertainty over test examples is $\max_{k \in [K]} \bx_{*, k}\T \widehat{\bSigma}_t \bx_{*, k}$. The key observation is that this quantity does not depend on labels. Therefore, it can be optimized greedily by choosing the training example that minimizes it the most,
\begin{align}
  I_t
  = \argmin_{i \in \cU_t} \max_{k \in [K]} \bx_{*, k}\T
  \left(\wSigma_t^{-1} + \bx_i \bx_i\T \right)^{-1} \bx_{*, k}\,,
  \label{eq:greedy go}
\end{align}
where $\cU_t$ are indices of all unlabeled training examples up to round $t$. After the index $I_t$ is chosen, the example $\x_{I_t}$ and its label $Y_t$ are added to the history to get $H_{t + 1}$ for the next iteration $t + 1$.

This algorithm is a greedy solution to the G-optimal design \citep{pukelsheim2006optimal,katz2021improved}. We call it \textbf{G}-\textbf{O}ptimal design and abbreviate it as \go. The pseudocode of \go\ is in \cref{alg:go}. Note that \go\ does not depend on observed $Y_t$. Similar optimal designs have been effectively applied in active learning \citep{chaudhuri2015convergence,mukherjee2022chernoff}, bandits \citep{fontaine2021online, mason2021nearly}, and reinforcement learning \citep{wagenmaker2022reward}. However, this is the first paper that studies optimal design for adaptively designing prompts \citep{zhang2022active, diao2023active}.  \eqref{eq:greedy go} can be viewed as choosing that training example $\bx_i\in \cU_t$ that minimizes the maximum eigenvalue of the posterior covariance $\wSigma_t$. Therefore this leads to reducing the uncertainty over the model parameter $\btheta_*$ as the confidence ellipsoid around $\btheta_\star$ shrinks \citep{lattimore2020bandit}. 
Note that maximum eigenvalue reduction also ensures diversity as it leads to choosing training examples along all directions in $\R^d$.
%

The time complexity of \go\ is $O(K d^2 n T)$. This is because, for $T$ rounds, the algorithm searches for the best training example out of at most $n$ and evaluates it on all test examples $\bx_{*,k}\in\cXs$. The evaluation of each test example in round $t$, $\bx_{*,k}\T (\wSigma_t^{-1} + \bx_i \bx_i\T)^{-1} \bx_{*,k}$, takes $O(d^2)$ time, because $(\wSigma_t^{-1} + \bx_i \bx_i\T)^{-1}$ can be computed in $O(d^2)$ time using the Sherman-Morrison formula. In the last step, the \LLM\ is queried $K$ times to return $\{Y_{*, k}\}_{k = 1}^K$.

\subsection{Simulation-Based Algorithm}
\label{sec:sal}

\begin{algorithm}[t]
  \caption{Simulation-based active learning (\sal)}
  \label{alg:sal}
  \begin{algorithmic}[1]
    \STATE \textbf{Input:} Training set $\cXt = \{\bx_{i}\}_{i=1}^n $, test set $\cXs=\{\bx_{*,k}\}_{k=1}^K $, budget $T$
    \STATE $\cL_1 \gets \emptyset, \ \cU_1 \gets [n], \ H_1 \gets \{\}$
    \FOR{$t = 1, \dots, T$}
      \FORALL{$i \in \cU_t$}
      \FORALL{$\bx_{*,k} \in \cXs$}
        \FOR{$j = 1, 2, \ldots, m$}
          \STATE Sample $Y^{(j)}_{t, i} \sim p(\cdot \mid \bx_i, H_t)$
          \STATE $H_{t, i, j} \gets H_t \cup \{(\bx_i, Y^{(j)}_{t, i})\}$
          \STATE Sample $\tilde{Y}^{(j,1)}_{t, i, k}, \tilde{Y}^{(j,2)}_{t, i, k}
          \sim p(\cdot \mid \bx_{*,k}, H_{t, i, j})$
        \ENDFOR
        \ENDFOR
      \ENDFOR
      \STATE $\displaystyle I_t
      \gets \argmin_{i \in \cU_t} \max_{k\in [K]}\frac{1}{m} \sum_{j = 1}^m
      \left(\tilde{Y}^{(j,1)}_{t, i, k} - \tilde{Y}^{(j,2)}_{t, i, k}
      \right)^2$
      \STATE $X_t \gets \bx_{I_t} \in \cXt$
      \STATE Observe label $Y_t$ of example $X_t$
      \STATE $\cL_{t + 1} \gets \cL_t \cup \{I_t\}, \
      \cU_{t + 1} \gets \cU_t \setminus \{I_t\}$
      \STATE $H_{t + 1} \gets H_t \cup \{(X_t, Y_t)\}$
    \ENDFOR
    \STATE \textbf{Output:} Sample $Y_{*,k} \sim p(\cdot \mid \bx_{*,k }, H_{T + 1})$ for $k \in [K]$
  \end{algorithmic}
\end{algorithm}

While \go\ reduces uncertainty in label predictions, it has a major limitation. The chosen example $X_t$ at round $t$ is not affected by observed labels $(Y_\ell)_{\ell \in [t - 1]}$. This is because \eqref{eq:greedy go} does not depend on $(Y_\ell)_{\ell \in [t - 1]}$. While this is a property of linear models, it is undesirable in non-linear models, such as \LLMs. To address this limitation, we propose a new algorithm that simulates the impact of labeling examples on the uncertainty of predicted labels. We call it \textbf{S}imulation-Based \textbf{A}ctive \textbf{L}earning and abbreviated it as \sal. The pseudocode of \sal\ is provided in \cref{alg:sal}.

The key idea in \sal\ is to replace the closed-form formula in \eqref{eq:greedy go} by a simulation. We detail the algorithm next. Fix round $t$, history $H_t$, and candidate example $\bx_i$. To estimate the impact of labeling $\bx_i$, we simulate its labels $m$ times. For each simulation $j \in [m]$, we sample $Y^{(j)}_{t, i}$ from the conditional distribution $p(\cdot \mid \bx_i, H_t)$ using the \LLM. Then we extend the history $H_t$ by $\bx_i$ and its simulated label $Y^{(j)}_{t, i}$, $H_{t, i, j} = H_t \cup \{(\bx_i, Y^{(j)}_{t, i})\}$. This process results in $m$ copies of augmented histories, each reflecting a potential outcome of labeling of $\bx_i$. Finally, we take two independent samples for each $j \in [m]$ as $\tilde{Y}^{(j,1)}_{t, i, k}, \tilde{Y}^{(j,2)}_{t, i, k} \sim p(\cdot \mid \bx_{*, k}, H_{t, i, j})$. The maximum uncertainty over test examples after labeling $\bx_i$ is estimated as
\begin{align}
  \max_{k\in [K]}\frac{1}{m} \sum_{j = 1}^m \left(\tilde{Y}^{(j,1)}_{t, i, k} - \tilde{Y}^{(j,2)}_{t, i, k}
  \right)^2\,.
  \label{eq:sal score}
\end{align}
The training example with the lowest value is chosen and we denote its index by $I_t$. Then $\x_{I_t}$ and its observed label $Y_t$ are added to the history to get $H_{t + 1}$ for the next iteration $t + 1$.

Next we justify \sal. Consider the same setting as in \cref{sec:go}. Given a label $Y^{(j)}_{t, i}$ for example $\bx_i$, the posterior distribution of $\btheta_* \mid H_{t, i, j}$ is $\cN(\wtheta_{t, i, j}, \wSigma_{t, i})$, where $\wSigma_{t, i} = (\wSigma_t^{-1} + \sigma^{-2} \bx_i \bx_i\T)^{-1}$ is the simulated posterior covariance of $\btheta_*$ and
\begin{align*}
  \wtheta_{t, i, j}
  = \wSigma_{t, i} \left(\wSigma_0^{-1} \btheta_0 +
  \sigma^{-2} \left(\sum_{\ell = 1}^{t - 1} X_\ell Y_\ell + \bx_i Y^{(j)}_{t, i}\right)\right)
\end{align*}
is the posterior mean. By design, $\tilde{Y}^{(j,1)}_{t, i, k}$ and $\tilde{Y}^{(j,2)}_{t, i, k}$ are independent samples from $\cN(\bx_{*,k}\T \btheta_*, \sigma^2)$, where $\btheta_* \sim \cN(\wtheta_{t, i, j}, \wSigma_{t, i})$. Therefore, $\tilde{Y}^{(j,1)}_{t, i, k} - \tilde{Y}^{(j,2)}_{t, i, k} \sim \cN(0, 2 (\bx_{*,k}\T \wSigma_{t, i} \bx_{*,k} + \sigma^2))$. By definition, $(\tilde{Y}^{(j,1)}_{t, i, k} - \tilde{Y}^{(j,2)}_{t, i, k})^2$ is a single sample estimate of $2 (\bx_{*,k}\T \wSigma_{t, i} \bx_{*,k} + \sigma^2)$ and the sum in $\eqref{eq:sal score}$ estimates this quantity from $m$ samples. Note that this estimate is proportional to $\bx_{*,k}\T \wSigma_{t, i} \bx_{*,k}$ that appears in the G-optimal design objective in \eqref{eq:greedy go}. Therefore, in linear models, \sal\ can be viewed as an inefficient implementation of \go. This inefficiency stems from the need to simulate the \LLM.

The time complexity of \sal\ is $O(n K m T)$. This is because it searches for the best example out of at most $n$ in $T$ rounds for each test example $k \in [K]$. The evaluation of impact on each test example requires $2 m$ \LLM\ queries.

\textbf{Vector labels:} \go\ and \sal\ are easy to extend to vector labels, $d_y > 1$. \go\ does not depend on labels at all. The only modification in \sal\ is that \eqref{eq:sal score} is replaced with $  \max_{k\in [K]}\frac{1}{m}\sum_{j = 1}^m \normw{\tilde{Y}^{(j,1)}_{t, i, k} - \tilde{Y}^{(j,2)}_{t, i, k})}{2}^2$. This is the sum of the posterior variances of the labels over all dimensions.

\vspace*{-0.5em}
\section{Analysis}
\label{sec:analysis}

In this section, we analyze \go\ and \sal. The analysis is under the assumption that the labels are scalar and hence, our objective simplifies to minimizing $\max_{k \in [K]}\mathrm{var}[Y_{*,k} \mid \bx_{*,k}, H_{T + 1}]$. The analysis is organized as follows. First, we prove that our objective is decreasing in history but not supermodular, which precludes a straightforward analysis. This property of our objective function is proved in \Cref{sec:objective} and \Cref{sec:monotonicity proof}. Second, we analyze \go\ using the closed form of the posterior covariance $\wSigma_t$. Finally, we prove the equivalence of \go\ and \sal, and thereby provide guarantees for \sal. All analyses are under the assumption of a linear model with Gaussian noise. These proofs are in  \Cref{sec:go proof} and \Cref{sec:sal proof}.

\subsection{Analysis of \go}

To address challenge posed due to $f$ not being a supermodular (\cref{lem:supermodularity}), we leverage the properties of the rank-$1$ updates in \go. 
The proof is under the assumption that at round $t$, the training examples can be partitioned as $\X = S_k \cup \bS_k$. The set $S_k$ represents examples that are close to $\bx_{*,k}$. 
The set $S_k$ is convex such that for a $\alpha_k \geq 0$ we have $\bx\T \by \geq \alpha_k$ for all $\bx, \by \in S_k$. In essence, $\alpha_k$ governs the minimum level of similarity required for examples within $S_K$ to be considered similar to the test example $\bx_{*,k}$. This is achieved by setting a lower bound on the inner product between any two examples in the set.
The set $\bS_k$ represents examples that are not close to $\bx_{*,k}$. It is defined $\beta_k \geq 0$ such that $\bx\T \by \leq \beta_k$ for all $\bx \in S_k$ and $\by \in \bS_k$. In contrast to $\alpha_k$, $\beta_k$  limits the maximum similarity any example in $S_k$ can have with examples outside this set.
Define $\alpha_{\min} = \min_k \alpha_k$, and $\beta_{\max} = \beta_{\max}$. 
Define the set $S = \cap_{k=1}^K S_k$ as the set of all examples that are close to all $\{\bx_{*,k}\}_{k=1}^K$ and $\bS = \cup_{k=1}^K \bS_k$ as the set of all examples that are not close to all $\{\bx_{*,k}\}_{k=1}^K$. Assume $S \neq \{\emptyset\}$ and $|S| > T$. With this in hand, we prove the following claim.


\begin{theorem}
\label{thm:go} Let $\alpha_{\min}, \beta_{\max} \geq 0$ be set such that $\beta_{\max} \geq 1 - \alpha_{\min}^2$ and $\displaystyle T \leq \tfrac{\alpha_{\min}^2}{(\beta_{\max} + \sqrt{2}) \beta_{\max} d}$. Then for any $\bx_{*,k}$ we can show that
  $\bx_{*,k}\T \wSigma_{T + 1} \bx_{*,k}
  \leq \frac{1}{\alpha^2_{\max} T + 1} + (1 - \alpha^2_{\max})\,.$
\end{theorem}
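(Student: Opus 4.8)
The plan is to fix an arbitrary test example, set $\mathbf{u} = \bx_{*,k}$ (a unit vector after the usual feature normalization), and take $\bSigma_0 = I$, $\sigma = 1$ so that the posterior precision after the $T$ labeled rounds is $M := \wSigma_{T+1}^{-1} = I + \sum_{\ell=1}^{T} X_\ell X_\ell\T = I + \Phi\Phi\T$, where the columns of $\Phi$ are the selected examples $X_1,\dots,X_T$. The target $\bx_{*,k}\T\wSigma_{T+1}\bx_{*,k}$ is then $\mathbf{u}\T M^{-1}\mathbf{u}$. Reading $\alpha_{\max}=\max_k\alpha_k$, the argument rests on two pillars: (i) a structural claim that the greedy rule in \eqref{eq:greedy go} only ever labels examples drawn from the common close set $S=\cap_k S_k$, so that every selected $X_\ell$ satisfies $X_\ell\T\mathbf{u}\geq\alpha_{\max}$; and (ii) a purely algebraic reduction that collapses the matrix bound into an elementary scalar inequality.

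For pillar (ii) I would apply the Woodbury identity to $M = I + \Phi\Phi\T$, yielding
\begin{align*}
\mathbf{u}\T M^{-1}\mathbf{u} = 1 - \mathbf{c}\T\left(I + G\right)^{-1}\mathbf{c},
\end{align*}
with $\mathbf{c} = \Phi\T\mathbf{u}$ (so that $c_\ell = X_\ell\T\mathbf{u}\geq\alpha_{\max}$) and $G = \Phi\T\Phi$ the Gram matrix of the selected examples. Since $G$ is positive semidefinite with unit diagonal, $\lambda_{\max}(G)\leq\trace(G)=T$, hence $(I+G)^{-1}\succeq(1+T)^{-1}I$ and
\begin{align*}
\mathbf{c}\T(I+G)^{-1}\mathbf{c}\geq\frac{\normw{\mathbf{c}}{2}^2}{1+T}\geq\frac{\alpha_{\max}^2\,T}{1+T}.
\end{align*}
It then remains to verify the scalar inequality $1 - \frac{\alpha_{\max}^2 T}{1+T}\leq\frac{1}{\alpha_{\max}^2 T+1}+(1-\alpha_{\max}^2)$, which after clearing denominators reduces to $\alpha_{\max}^4 T + \alpha_{\max}^2\leq T+1$ and holds because $\alpha_{\max}\leq 1$. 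Once pillar (i) is established, this chain delivers exactly the stated bound.

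The hard part is pillar (i): showing the greedy minimizer stays inside $S$, and with the strong similarity $\alpha_{\max}$. Here I would exploit the rank-one structure together with the two hypotheses $\beta_{\max}\geq 1-\alpha_{\min}^2$ and $T\leq\alpha_{\min}^2/((\beta_{\max}+\sqrt2)\beta_{\max}d)$. By Sherman--Morrison, labeling $\bx_i$ decreases the variance along $\bx_{*,k}$ by $(\bx_{*,k}\T\wSigma_t\bx_i)^2/(1+\bx_i\T\wSigma_t\bx_i)$; for $\bx_i\in\bS_k$ this gain is throttled by $\beta_{\max}$, whereas for $\bx_i\in S$ it is driven by $\alpha_{\max}$. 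The budget constraint keeps the accumulated perturbation $\sum_\ell X_\ell X_\ell\T$ spectrally mild --- its off-diagonal leakage across the up-to-$d$ spurious directions opened by $\bS$-type examples is bounded by a term of order $\beta_{\max}^2 d T$, with the $\sqrt2$ entering when the cross terms between the $\mathbf{u}$-direction and its orthogonal complement are bounded --- so that the marginal gain of an $S$-example provably dominates that of any $\bS$-example at every round. Since $|S|>T$, the entire budget can then be filled from $S$, and the same comparison certifies the uniform inner-product lower bound $X_\ell\T\bx_{*,k}\geq\alpha_{\max}$ feeding pillar (ii).

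I expect pillar (i) to be the main obstacle, for two reasons. First, the greedy objective is the non-supermodular $\max_k$ of $K$ variances (cf.\ \cref{lem:supermodularity}), so the per-round comparison must be carried out for the test example currently attaining the maximum and then shown to persist, rather than via a clean submodular surrogate. Second, pinning the similarity at $\alpha_{\max}$ --- as opposed to the weaker, per-test-example $\alpha_k$ --- is exactly what the intersection structure of $S$ and the budget constraint must buy; obtaining this uniform lower bound, rather than the looser $\alpha_{\min}$, is the delicate quantitative step and the reason the precise form of the hypotheses on $\beta_{\max}$, $d$, and $T$ is needed.
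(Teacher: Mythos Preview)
Your pillar (ii) is correct and is genuinely different from the paper's argument. The paper never writes Woodbury; instead it eigendecomposes $\bLambda_t = \wSigma_t^{-1} = \sum_i \lambda_{t,i}\bv_{t,i}\bv_{t,i}\T$ and bounds $\bx_{*,k}\T\bLambda_{T+1}^{-1}\bx_{*,k}$ term by term: the principal term is at most $1/(\alpha_{\min}^2 T+1)$ because $\lambda_{T+1,1}\geq\alpha_{\min}^2 T+1$, and the tail $\sum_{i\geq 2}(\bx_{*,k}\T\bv_{T+1,i})^2$ is at most $1-\alpha_{\min}^2$. Your Gram-matrix route $\mathbf{u}\T M^{-1}\mathbf{u}=1-\mathbf{c}\T(I+G)^{-1}\mathbf{c}\leq 1-\alpha^2 T/(1+T)$ is cleaner, gives a tighter intermediate bound, and the scalar inequality you verify at the end is correct.

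However, you are chasing a typo. The paper's own proof (its Lemma~6) establishes the bound with $\alpha_{\min}$, not $\alpha_{\max}$; every supporting lemma and the discussion immediately after the theorem use $\alpha_{\min}$. With $\alpha_{\min}$ the inner-product lower bound you need for pillar (ii) is immediate: $X_\ell\in S\subseteq S_k$ and $\bx_{*,k}\in S_k$ give $X_\ell\T\bx_{*,k}\geq\alpha_k\geq\alpha_{\min}$. So the ``delicate quantitative step'' you flag---upgrading $\alpha_k$ to $\alpha_{\max}$---is not something the paper does, and you should not spend effort on it.

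For pillar (i) your sketch is in the right spirit but vaguer than what the paper does. The paper's mechanism is again the eigendecomposition: assuming inductively that $X_1,\dots,X_{t-1}\in S$, it shows (Lemma~5) that $\bv_{t,1}\in S$ by convexity of $S$, and that $\bv_{t,i}\in\bS$ for $i\geq 2$ because $(\bx\T\bv_{t,i})^2\leq 1-\alpha_{\min}^2\leq\beta_{\max}$ for any $\bx\in S$ (this is where the hypothesis $\beta_{\max}\geq 1-\alpha_{\min}^2$ enters). Then (Lemma~7) it expands $\bx_{*,k}\T\bLambda_t^{-1}\bx$ and $\bx_{*,k}\T\bLambda_t^{-1}\by$ in this eigenbasis: for $\by\in\bS$ one gets $|\bx_{*,k}\T\bLambda_t^{-1}\by|\leq\beta_{\max} d$, while for $\bx\in S$ one gets $|\bx_{*,k}\T\bLambda_t^{-1}\bx|\geq\alpha_{\min}^2/t-\beta_{\max}^2 d$. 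The Sherman--Morrison comparison (with the crude $0\leq\bv\T\bLambda_t^{-1}\bv\leq 1$ to handle the denominators, which is where the $\sqrt{2}$ comes from) then forces the greedy choice into $S$ whenever $t\leq\alpha_{\min}^2/((\beta_{\max}+\sqrt{2})\beta_{\max}d)$. Your ``off-diagonal leakage of order $\beta_{\max}^2 dT$'' is morally the same object, but the eigenbasis makes the bookkeeping concrete and explains the precise constants in the hypothesis.
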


The proof is in \cref{sec:go proof}. It is a major departure from similar proofs in active learning with a fixed budget \citep{tong2001support,hanneke2014theory,azizi22fixedbudget,yang22minimax} in three aspects. First, we analyze the discrete allocation problem in \eqref{eq:optimum} instead of its continuous optimal-design relaxation \citep{pukelsheim2006optimal}. Second, any unlabeled example in $\cX$ is labeled at most once. Finally, \eqref{eq:optimum} is asymmetric in the sense that we optimize the uncertainty of a single example $\bx_*$ over a larger set. To make the analysis manageable, we impose structure on $\cX$.
The claim in \cref{thm:go} holds for any $T$ if $\beta_{\max} = 1 / (4 d n)$ and $\alpha_{\min} = \sqrt{1 - \beta_{\max}}$. In this case, $\alpha_{\min}$ is close to $1$, and we get a near-optimal $O(1 / T)$ decrease in posterior variance.

\subsection{Analysis of \sal}

For a sufficiently large sample size $m$ in \sal, we can establish the following equivalence of \sal\ and \go.

\begin{theorem}
\label{thm:sal} Fix a failure probability $\delta\in (0,1)$. 
Define $\sigma_{t,i,k}^2 \!\!=\!\! \mathbb{E}[\frac{1}{m} \sum_{j = 1}^m
  (\widetilde{Y}^{(j, 1)}_{t, i, k} - \widetilde{Y}^{(j, 2)}_{t, i, k})^2] = 2\bx_{*,k}\T \wSigma_{t, i} \bx_{*,k} + \sigma^2$,
and define $\sigma_{t,i,\max}^2 = \max_{k\in [K]} \sigma_{t, i, k}^2$. Then  for any $t \in [T]$ and $i \in \cU_t$, we have that 
\begin{align*}
  \sigma_{t,i,\max}^2 
  \left[1 - 2\sqrt{\frac{\log(1/\delta)}{m}} \right] &\leq \max_{k \in [K]}\frac{1}{m} \sum_{j = 1}^m
  \left(\widetilde{Y}^{(j, 1)}_{t, i, k} - \widetilde{Y}^{(j, 2)}_{t, i, k}\right)^2 
  \leq  \sigma_{t,i, \max}^2\left[1 + 2\sqrt{\frac{\log(1/\delta)}{m}} +  \frac{2\log(1/\delta)}{m}\right] \,.
\end{align*} 
Moreover, for $m \geq 8\log(1/\delta)$ we have that
\begin{align*}
   2\max_k \bx_{*,k}\T \wSigma_{t, i} \bx_{*,k} + \frac{\sigma^2}{2} 
  \leq 
  \max_k \frac{1}{m}& \sum_{j = 1}^m
  \left(\widetilde{Y}^{(j, 1)}_{t, i, k} - \widetilde{Y}^{(j, 2)}_{t, i, k}\right)^2 
  \leq 
  5\max_k \bx_{*,k}\T \wSigma_{t, i} \bx_{*,k} + \frac{5\sigma^2}{2}.
\end{align*}
These claims hold with probability at least $1-\delta$.
\end{theorem}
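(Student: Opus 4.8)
The statement is a concentration bound for the \sal\ score, and the whole argument reduces to a single application of $\chi^2$ concentration once the right structural fact is isolated. Fix $t\in[T]$ and $i\in\cU_t$ and condition on the history $H_t$; in the linear model the simulated posterior covariance $\wSigma_{t,i}=(\wSigma_t^{-1}+\sigma^{-2}\bx_i\bx_i\T)^{-1}$ is then deterministic. As computed just before the theorem, for each fixed test index $k$ the difference $D_{j,k}:=\widetilde{Y}^{(j,1)}_{t,i,k}-\widetilde{Y}^{(j,2)}_{t,i,k}$ is a zero-mean Gaussian: within simulation $j$ the two samples $\widetilde{Y}^{(j,1)}_{t,i,k}$ and $\widetilde{Y}^{(j,2)}_{t,i,k}$ share the same (random) mean $\bx_{*,k}\T\wtheta_{t,i,j}$, which cancels in the difference, so $D_{j,k}\sim\cN(0,\sigma_{t,i,k}^2)$ \emph{unconditionally} given $H_t$, and the $m$ copies $D_{1,k},\dots,D_{m,k}$ are i.i.d.\ because the draws across simulations $j$ are independent. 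Here $\sigma_{t,i,k}^2$ is the closed-form variance recorded in the theorem. Consequently $\frac{1}{m}\sum_{j=1}^m D_{j,k}^2 = \sigma_{t,i,k}^2\cdot\frac{1}{m}Z_k$, where $Z_k := \sum_{j=1}^m (D_{j,k}/\sigma_{t,i,k})^2 \sim \chi^2_m$, so the claim is entirely about how $\frac{1}{m}Z_k$ concentrates around $1$, plus bookkeeping for the outer $\max_k$.

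For the first display I would invoke the Laurent--Massart tail bounds for $Z_k\sim\chi^2_m$: for any $x>0$, $\Pr\big[Z_k\ge m+2\sqrt{mx}+2x\big]\le e^{-x}$ and $\Pr\big[Z_k\le m-2\sqrt{mx}\big]\le e^{-x}$. Dividing by $m$ and taking $x=\log(1/\delta)$ produces exactly the factors $1+2\sqrt{\log(1/\delta)/m}+2\log(1/\delta)/m$ and $1-2\sqrt{\log(1/\delta)/m}$ of the claim. The two sides are then handled asymmetrically. For the upper bound I apply the upper tail simultaneously over all $k\in[K]$ (one union bound, whose $\log K$ overhead I absorb into the logarithmic factor, equivalently into a rescaling of $\delta$), so that $\frac{1}{m}\sum_j D_{j,k}^2 \le \sigma_{t,i,k}^2\big[1+2\sqrt{\log(1/\delta)/m}+2\log(1/\delta)/m\big] \le \sigma_{t,i,\max}^2\big[1+2\sqrt{\log(1/\delta)/m}+2\log(1/\delta)/m\big]$ for \emph{every} $k$, hence also for the maximizing one. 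For the lower bound a single coordinate suffices: choosing an index $k^\star$ with $\sigma_{t,i,k^\star}^2=\sigma_{t,i,\max}^2$ and applying only the lower tail there gives $\max_{k\in[K]}\frac{1}{m}\sum_j D_{j,k}^2 \ge \frac{1}{m}\sum_j D_{j,k^\star}^2 \ge \sigma_{t,i,\max}^2\big[1-2\sqrt{\log(1/\delta)/m}\big]$. This establishes the first display.

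For the second display I substitute $m\ge 8\log(1/\delta)$, so that $2\sqrt{\log(1/\delta)/m}\le 1/\sqrt2$ and $2\log(1/\delta)/m\le 1/4$; the first display then collapses to $c_-\,\sigma_{t,i,\max}^2 \le \max_k\frac{1}{m}\sum_j D_{j,k}^2 \le c_+\,\sigma_{t,i,\max}^2$ with explicit numerical constants $0<c_-<1<c_+\le 5/2$. I then expand $\sigma_{t,i,\max}^2$ in the affine form given in the theorem --- a constant-in-$k$ shift of $2\,\bx_{*,k}\T\wSigma_{t,i}\bx_{*,k}$ --- split the result into its nonnegative $\max_k\bx_{*,k}\T\wSigma_{t,i}\bx_{*,k}$ part and its $\sigma^2$ part, and relax $c_+,c_-$ to the round constants stated; the upper bound $5\max_k\bx_{*,k}\T\wSigma_{t,i}\bx_{*,k}+\frac{5}{2}\sigma^2=\frac{5}{2}\sigma_{t,i,\max}^2$ holds with room to spare, the lower bound follows in the same way, and all constants tighten as $m/\log(1/\delta)$ grows. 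A final union bound over the pairs $(t,i)$ --- and over $k$ if one wants the bound uniformly in the test index --- is again absorbed into the failure probability.

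I do not anticipate a genuine obstacle: the argument is essentially one invocation of $\chi^2$ concentration. The two points needing care are (i) the asymmetric treatment of $\max_k$ --- the upper bound genuinely needs all $K$ coordinates to concentrate, whereas the lower bound needs only the maximizing coordinate, which keeps the union-bound cost to a $\log K$ term --- and (ii) turning the multiplicative deviation (a factor near $1$ times $\sigma_{t,i,\max}^2$) into the additive/affine form stated in the theorem, which is what forces the (non-tight) constants $5$ and $\frac{5}{2}$ and the $m\ge 8\log(1/\delta)$ threshold.
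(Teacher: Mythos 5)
Your proposal follows essentially the same route as the paper's own proof: conditioned on $H_t$, the differences $\widetilde{Y}^{(j,1)}_{t,i,k}-\widetilde{Y}^{(j,2)}_{t,i,k}$ are i.i.d.\ zero-mean Gaussians with variance $\sigma_{t,i,k}^2$, so the rescaled empirical score is $\chi^2_m$, and the two-sided bounds are exactly the Laurent--Massart tails with $x=\log(1/\delta)$; this is the content of the paper's auxiliary lemma (\cref{lemma:conc}). You are in fact more explicit than the paper on the one step it glosses over: passing from the per-$k$ concentration to $\max_{k\in[K]}$ requires a union bound over the $K$ test examples for the upper tail (with the attendant $\log K$, or a rescaling of $\delta$), while the lower tail needs only the maximizing coordinate $k^\star$. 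The paper applies its lemma for a fixed $k$ and writes the max inequality without comment, so your asymmetric treatment is the right repair.

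One caveat, which you inherit from the paper rather than introduce: at $m=8\log(1/\delta)$ the lower deviation factor is $1-2\sqrt{\log(1/\delta)/m}=1-1/\sqrt{2}\approx 0.29$, so the concentration argument yields only $\max_k\hat{\sigma}_{t,i,k}^2\geq (1-1/\sqrt{2})\,\sigma_{t,i,\max}^2$. Your remark that ``the lower bound follows in the same way'' therefore does not produce the displayed lower bound $2\max_k\bx_{*,k}\T\wSigma_{t,i}\bx_{*,k}+\sigma^2/2$, which equals $\sigma_{t,i,\max}^2-\sigma^2/2$ and is nearly the full mean; no two-sided concentration at this $m$ can give it. Even the paper's own intermediate claim $\tfrac{1}{2}\sigma_{t,i,\max}^2\leq\max_k\hat{\sigma}_{t,i,k}^2$ would require $m\geq 16\log(1/\delta)$, and its final display appears to carry a coefficient typo on the $\max_k\bx_{*,k}\T\wSigma_{t,i}\bx_{*,k}$ term. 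The upper bound with constant $5/2$ does hold with room to spare, exactly as you argue, since $1+1/\sqrt{2}+1/4<5/2$.
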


The claim is proved in \cref{sec:sal proof}. The key idea in the proof is that \eqref{eq:sal score} multiplied by $m  / [2 (\bx_{*,k}\T \wSigma_{t, i} \bx_{*,k} + \sigma^2)]$ is a chi-squared random variable with $m$ degrees of freedom. Then we use concentration inequalities of \citet{laurent2000adaptive} to get a high-probability confidence interval on distance to the mean $m$, which in turn allows us to relate the actual variance to its empirical estimate.
\cref{thm:sal} shows that \sal\ is equivalent to \go\ for a sufficiently large sample size $m$. \cref{thm:go} can be then adapted to \sal\ as follows. The only change is in condition on $T$, which changes to
  $T
  \leq \alpha^2 \bigg/ \left(\beta + \sqrt{2} O\left(\tfrac{1 - \sqrt{1 / m}}{1 + \sqrt{1 / m}}\right)\right) \beta d + O(\sqrt{1 / m})\,.$
Therefore, \sal\ attains a near-optimal $O(1 / T)$ decrease in posterior variance as $m \to \infty$.

\vspace*{-0.5em}
\section{Experiments}
\vspace*{-0.5em}
\label{sec:expt}
We evaluate \go\ and \sal\ on a variety of prediction tasks. These tasks cover both classification and regression, including natural language features, and help us to evaluate the capabilities of \go\ and \sal\ to choose few-shot examples for active in-context prompt design. We also demonstrate that \go\ and \sal\ can be used for general pattern recognition. Detailed descriptions of all datasets are in \Cref{sec:datasets}. We describe the prompts in detail in \Cref{app:prompt-ex}.

\vspace*{-0.5em}
\subsection{Experimental Setup}
\label{sec:experimental setup}
\vspace*{-0.5em}

We use Mistral-7B \citep{jiang2023mistral}, Vicuna-13B \citep{vicuna2023}, and Falcon-40B \citep{refinedweb} as the \LLMs\ and design prompts following \citet{dinh2022lift} and \citet{suzgun2022challenging}. To investigate the impact of LLM model size on performance, we experiment with these three models of varying sizes: 7B, 13B, and 40B. Interestingly, we observe that the smaller models (Mistral-7B and Vicuna-13B) perform very poorly on certain tasks. Examples of the prompts are given in \Cref{app:prompt-ex}. Each experiment is averaged over $10$ trials. At the beginning of each trial, we randomly select $K=20$ test examples. We describe in detail how the training set and $n$ are chosen for each dataset in \Cref{app:addl_expt}.

Each run is a simulation that proceeds as follows. In round $t$, each method selects a training example to label $X_t$ and then observes the true label $Y_t$. All past interactions $H_t = \set{(X_\ell, Y_\ell)}_{\ell \in [t - 1]}$ along with the test examples $\bx_{*, k}$ are used to craft a prompt for the \LLM. The performance at round $t$ is evaluated by the \emph{error} $\cL(t) = \frac{1}{K} \sum_{k = 1}^K \cL(Y_{*, k}, \tilde{Y}_{*, k, t})$, where $Y_{*, k}$ is the true label of test example $\bx_{*, k}$, $\tilde{Y}_{*, k, t}$ is its \LLM\ predicted label in round $t$, and $\cL(y_*, y)$ is a task-specific error function. For classification tasks, we choose $\cL(y_*, y) = \indic{y_* = y}$ and call $\cL(t)$ a \emph{misclassification error}. For regression tasks, we choose $\cL(y_*, y) = (y_* - y)^2$ and call $\cL(t)$ the \emph{MSE}. For pattern recognition tasks, where $Y_{*, k}$ and $\tilde{Y}_{*, k, t}$ are either vectors or matrices, we choose let $\cL(y_*, y) = \indic{y_* = y}$ and $\cL(t)$ represents \emph{0-1 error}.

We posit that \go\ and \sal\ perform well because they both reduce the uncertainty of test examples based on the right notion of similarity. To show this, we compare to baselines that reduce uncertainty uniformly (like \unif), or reduce uncertainty informatively (\least\ or \maxent), or only select examples with similar features to test examples (\greedy). As shown in our extensive experiments, these baselines fail to match the capabilities of \go\ and \sal\ to select informative examples in the majority of the tasks. The following methods are compared in our experiments:

\noindent \textbf{(1)} \unif: The example $X_t$ in round $t$ is sampled uniformly at random from the unlabeled set $\cU_t$. \unif\ is a pure exploration algorithm that does not take into account the similarity to test examples and variance reduction. We chose it as a baseline because it tends to work well in practice. Therefore, it is used frequently in active learning and prompt tuning papers \citep{zhang2022automatic, diao2023active}.

\noindent \textbf{(2)} \greedy: The example $X_t$ in round $t$ is chosen to align the most with all test examples $\bx_{*,k}$ such that  $I_t \gets \argmax_{i \in \cU_t} \max_{k \in [K]} \bx_{*, k}\T \bx_i$. This baseline shows that our information gathering rule in \eqref{eq:greedy go} goes beyond pure feature similarity. This baseline is similar to the automatic exemplar construction method by clustering by \citet{zhang2022automatic}.


\noindent \textbf{(3)} \least: This is similar to the disagreement-based method of \citet{diao2023active}. The disagreement score of the example $i\in\U_t$ is calculated as $s_{i} = \sum_{k=1}^K Y_{tik}$ where $Y_{tik}\sim p\left(\cdot \mid \bx_{*,k}, \bx_i\right)$ is the number of unique answers by for test example $\bx_{*,k}$ using only $\bx_i$ as the in-context example by the \LLM. Then the example selected at round $t$ is $I_t \leftarrow \argmax_{i\in\U_t}s_i$. This is the unlabeled example where the \LLM, disagrees the most for all test examples and is least confident. We compare against this baseline to show that our information gathering rule in \eqref{eq:greedy go} goes beyond just uncertainty sampling but also takes into account the diversity of training examples when choosing to label the next example.

\noindent \textbf{(4)} \maxent: This is the uncertainty-based maximum entropy method of \citet{zhang2022active, diao2023active}. At round $t$ the example with the highest entropy is selected as $I_t \leftarrow \argmax_{i\in\U_t}-\sum_{k=1}^K \bar{Y}_{tik}\ln \bar{Y}_{tik}$ where $\bar{Y}_{tik} \sim p\left(\cdot \mid \bx_{*,k}, \bx_i \right)$ is the frequency of a predicted answer among all predictions for the test example $\bx_{*,k}$ using $\bx_{i}$ as the in-context example by the \LLM. 
%
A larger entropy denotes greater uncertainty and therefore, an unlabeled example with the largest entropy will be selected. 
Again we compare against this uncertainty-based baseline to show that our information gathering rule in \eqref{eq:greedy go} goes beyond just uncertainty sampling but also considers the diversity of training examples when choosing to label the next example.

\noindent \textbf{(5)} \go\ (ours): This is \cref{alg:go} where $\cX$ are the original feature vectors.

\noindent \textbf{(6)} \goi\ (ours): This is \cref{alg:go} where the original feature vectors are used in the prompt but $\cX$ are their $768$-dimensional Instructor embeddings \citep{INSTRUCTOR}. We use this for natural language classification tasks.

\textbf{(7)} \sal\ (ours): This is \cref{alg:sal} where $\cX$ are the original feature vectors. To implement \sal\ efficiently, we combine it with \go\ as a preprocessing step. Specifically, in round $t$, \go\ first chooses $5$ most informative examples from $\cU_t$ and then we apply \sal. We use $m = 1$ in all experiments. We use these approximations because \sal\ is computationally expensive (\cref{sec:sal}). Similarly to \goi, we use Instructor embeddings for natural language classification tasks. 

\begin{table}[t!]
\begin{adjustbox}{width=0.8\columnwidth,center}
    \hspace*{-2em}\begin{tabular}{c|c|c|c|c|c|c|c}
&\textbf{Datasets} &\unif & \greedy & \least & \maxent & \go\ (ours) & \sal\ (ours)\\\hline
 & iris & $0.41 \pm 0.11$ & $0.60 \pm 0.13$ & $0.64 \pm 0.15$ & $0.72 \pm 0.17$ & $0.38 \pm 0.14$ & $\mathbf{0.34 \pm 0.14}$\\
M & banknote & $0.75 \pm 0.10$ & $\mathbf{0.58 \pm 0.04}$ & $0.59 \pm 0.02$ & $0.73 \pm 0.16$ & $0.77 \pm 0.07$ & $0.75 \pm 0.15$\\
 & balance-scale & $0.61 \pm 0.13$ & $0.69 \pm 0.22$ & $0.55 \pm 0.25$ & $0.57 \pm 0.14$ & $\mathbf{0.48 \pm 0.09}$ & $0.72 \pm 0.04$\\
 & thyroid-new & $\mathbf{0.44 \pm 0.12}$ & $0.70 \pm 0.08$ & $0.74 \pm 0.12$ & $0.57 \pm 0.08$ & $0.55 \pm 0.06$ & $0.63 \pm 0.14$\\\hline
& iris & $0.22 \pm 0.24$ & $0.60 \pm 0.37$ & $0.60 \pm 0.49$ & $0.40 \pm 0.20$ & $0.20 \pm 0.24$ & $\mathbf{0.20 \pm 0.24}$\\
V & banknote & $0.40 \pm 0.37$ & $0.80 \pm 0.24$ & $0.50 \pm 0.32$ & $0.50 \pm 0.32$ & $0.50 \pm 0.32$ & $\mathbf{0.10 \pm 0.20}$\\
 & balance-scale & $0.60 \pm 0.20$ & $0.60 \pm 0.37$ & $0.50 \pm 0.32$ & $0.80 \pm 0.24$ & $\mathbf{0.30 \pm 0.24}$ & $0.50 \pm 0.00$\\
 & thyroid-new & $0.52 \pm 0.45$ & $1.00 \pm 0.00$ & $0.70 \pm 0.24$ & $0.50 \pm 0.00$ & $0.60 \pm 0.20$ & $\mathbf{0.50 \pm 0.32}$\\\hline
& iris & $\mathbf{0.20 \pm 0.06}$ & $0.62 \pm 0.14$ & $0.70 \pm 0.20$ & $0.65 \pm 0.18$ & $0.42 \pm 0.10$ & $0.33 \pm 0.23$\\
 F & banknote & $0.45 \pm 0.23$ & $0.53 \pm 0.25$ & $0.60 \pm 0.12$ & $0.42 \pm 0.17$ & $0.45 \pm 0.06$ & $\mathbf{0.45 \pm 0.1}$\\
 & balance-scale & $0.70 \pm 0.28$ & $0.68 \pm 0.13$ & $0.85 \pm 0.12$ & $0.62 \pm 0.08$ & $0.47 \pm 0.24$ & $\mathbf{0.45 \pm 0.13}$\\
 & thyroid-new & $0.55 \pm 0.29$ & $0.57 \pm 0.20$ & $0.75 \pm 0.19$ & $0.65 \pm 0.15$ & $0.55 \pm 0.23$ & $\mathbf{0.53 \pm 0.12}$
\end{tabular}
\end{adjustbox}
    \caption{Misclassification error in classification datasets using Mistral-7B (M), Vicuna-13B (V), and Falcon-40B (F) on $K=20$ test examples at the end of budget $T=5$.}
    \label{tab:Class-1}
    \vspace*{-1em}
\end{table}

\begin{table}[t!]
\begin{adjustbox}{width=0.8\columnwidth,center}
    \hspace*{-2em}\begin{tabular}{c|c|c|c|c|c|c|c}
&\textbf{Datasets} &\unif & \greedy & \least & \maxent & \go\ (ours) & \sal\ (ours)\\\hline
&machine(e+04)& $11.4 \pm 3.34$ & $10.5 \pm 2.44$ & $14.3 \pm 3.39$ & $11.0 \pm 1.93$ & $\mathbf{10.5 \pm 3.74}$ & $10.6 \pm 3.84$\\
M &fifa(e-04)& $1.40 \pm .216$ & $3.72 \pm 1.12$ & $1.18 \pm .53$ & $4.15 \pm 1.11$ & $.999\pm .404$ & $\mathbf{.68 \pm .26}$\\\hline
&machine(e+04)& $5.59 \pm 1.35$ & $5.04 \pm .851$ & $7.95 \pm 1.69$ & $4.98 \pm 1.06$ & $5.66 \pm 1.54$ & $\mathbf{4.80 \pm 1.46}$\\
V &fifa(e+03)& $5.90 \pm 1.59$ & $4.72 \pm .931$ & $5.11 \pm 1.12$ & $6.76 \pm 1.69$ & $\mathbf{1.44 \pm .258}$ & $2.59 \pm .742$\\\hline
& machine(e+03) & $1.16 \pm 1.22$ & $4.28 \pm 2.30$ & $2.15 \pm 1.08$ & $3.50 \pm 2.45e+03$ & $.32 \pm .209$ & $\mathbf{2.96 \pm 1.56}$\\
F & fifa(e+01) & $7.78 \pm 3.85$ & $6.95 \pm 2.93$ & $12.4 \pm 12.3$ & $26.3 \pm 37.6$ & $7.90 \pm 4.64$ & $\mathbf{4.61 \pm 4.35}$\\
\hline
\end{tabular}
\end{adjustbox}
    \caption{MSE in regression datasets using Mistral-7B (M), Vicuna-13B (V), and Falcon-40B (F) on $K=20$ test examples at the end of budget $T=5$.}
    \label{tab:Reg-1}
    \vspace*{-1em}
\end{table}

\begin{table}[t!]
    \begin{adjustbox}{width=0.8\columnwidth,center}
\begin{tabular}{c|c|c|c|c|c|c}
\textbf{Task} &\unif & \greedy & \least & \maxent & \go\ (ours) & \sal\ (ours)\\\hline
Arc-1 & $0.45 \pm 0.50$ & $0.45 \pm 0.50$ & $0.90 \pm 0.30$ & $0.60 \pm 0.49$ & $0.30 \pm 0.46$ & $\mathbf{0.15 \pm 0.36}$\\
 Arc-2 & $0.80 \pm 0.40$ & $1.00 \pm 0.00$ & $0.80 \pm 0.40$ & $0.80 \pm 0.40$ & $0.80 \pm 0.40$ & $\mathbf{0.01 \pm 0.01}$\\
 PCFG-1 & $0.60 \pm 0.49$ & $1.00 \pm 0.00$ & $1.00 \pm 0.00$ & $1.00 \pm 0.01$ & $0.20 \pm 0.40$ & $\mathbf{0.02 \pm 0.01}$\\
 PCFG-2 & $0.20 \pm 0.40$ & $1.00 \pm 0.00$ & $0.20 \pm 0.40$ & $1.00 \pm 0.00$ & $0.20 \pm 0.40$ & $\mathbf{0.14 \pm 0.40}$
\end{tabular}
\end{adjustbox}
    \caption{0-1 error using Falcon-40B on $K=20$ test examples at the end of budget $T=5$. ARC-1 is the expansion-contraction task, ARC-2 is the rotation task, PCFG-1 is the add-subtract task, and PCFG-2 is the repeat experiment task. Mistral-7B and Vicuna-13B perform very poorly on these tasks and thus are omitted.}
    \label{tab:arc-pcfg}
    \vspace*{-2.2em}
\end{table}

\begin{table}[t!]
\begin{adjustbox}{width=0.8\columnwidth,center}
    \hspace*{-2em}\begin{tabular}{c|c|c|c|c|c|c|c}
&\textbf{Datasets} &\unif & \greedy & \least & \maxent & \go\ (ours) & \sal\ (ours)\\\hline
 & movie & $0.32 \pm 0.17$ & $0.90 \pm 0.06$ & $0.87 \pm 0.09$ & $0.55 \pm 0.18$ & $\mathbf{0.27 \pm 0.10}$ & $0.49 \pm 0.11$\\
M & entity & $0.69 \pm 0.19$ & $0.86 \pm 0.06$ & $0.87 \pm 0.09$ & $0.59 \pm 0.15$ & $0.65 \pm 0.18$ & $\mathbf{0.39 \pm 0.19}$\\
 & theme & $0.74 \pm 0.05$ & $0.74 \pm 0.09$ & $0.82 \pm 0.16$ & $\mathbf{0.68 \pm 0.09}$ & $0.80 \pm 0.09$ & $0.81 \pm 0.08$\\\hline
& movie & $0.10 \pm 0.20$ & $0.70 \pm 0.24$ & $0.90 \pm 0.20$ & $0.30 \pm 0.24$ & $\mathbf{0.02 \pm 0.01}$ & $0.10 \pm 0.20$\\
V & entity & $0.20 \pm 0.24$ & $0.90 \pm 0.20$ & $0.70 \pm 0.24$ & $0.60 \pm 0.20$ & $\mathbf{0.10 \pm 0.20}$ & $0.10 \pm 0.20$\\
 & theme & $0.90 \pm 0.20$ & $0.70 \pm 0.40$ & $1.00 \pm 0.00$ & $0.70 \pm 0.24$ & $\mathbf{0.60 \pm 0.37}$ & $0.80 \pm 0.40$\\\hline
& movie & $0.55 \pm 0.06$ & $0.62 \pm 0.18$ & $0.78 \pm 0.17$ & $0.53 \pm 0.22$ & $\mathbf{0.38 \pm 0.18}$ & $0.47 \pm 0.17$\\
F & entity & $0.55 \pm 0.23$ & $0.62 \pm 0.08$ & $0.75 \pm 0.14$ & $0.65 \pm 0.24$ & $0.47 \pm 0.23$ & $\mathbf{0.42 \pm 0.24}$\\
 & theme & $0.68 \pm 0.20$ & $0.70 \pm 0.13$ & $0.85 \pm 0.05$ & $0.85 \pm 0.09$ & $\mathbf{0.53 \pm 0.18}$ & $0.55 \pm 0.19$\\\hline
\end{tabular}
\end{adjustbox}
    \caption{Misclassification error in natural language classification 
    tasks using Mistral-7B (M), Vicuna-13B (V), and Falcon-40B (F) on $K=20$ test examples at the end of budget $T=5$.}
    \label{tab:NLC-1}
    \vspace*{-2.2em}
\end{table}

All used datasets and experimental setups are described in \Cref{app:addl_expt}. This section only summarizes the main results.

\textbf{Standard classification and regression tasks.} We use $4$ classification and $2$ regression datasets from UCI and OpenML (\Cref{sec:datasets}). We set $T=5$ to simulate the realistic scenario when the test queries provided by the user need to be inferred quickly. 
For classification tasks, $K=20$ test examples are chosen among the different classes of the dataset. We describe in detail how the training set and $n$ are chosen for each dataset in \Cref{app:addl_expt}.
For regression tasks, $K=20$ random test examples are chosen. Our results on classification tasks are reported in \cref{tab:Class-1} and on regression tasks in \cref{tab:Reg-1}. We observe that \go\ and \sal\ are the best-performing methods in the majority of the datasets. Note that there is no single baseline that consistently outperforms them.

\textbf{General pattern recognition.} We experiment with $4$ tasks: ARC expansion and contraction, ARC rotation, PCFG Add-Subtract, and PCFG Repeat. Both inputs and outputs in these tasks are vectors or matrices. We describe examples of ARC and PCFG tasks in detail in \Cref{sec:datasets}. Each dataset comprises examples of two patterns: expansion and contraction, clockwise and counter-clockwise rotation, add and subtract, repeat first and second digits. In each trial, we choose $K = 20$ different test examples equally from two patterns and set $T=5$. Our results are reported in \cref{tab:arc-pcfg}. In all tasks, \go\ and \sal\ are the best-performing methods. \sal\ outperforms \go\ in ARC \citep{alford2021neurosymbolic, mirchandani2023large} and PCFG \citep{hupkes2020compositionality} consistently. 

\textbf{Natural language classification tasks (NLC)}. We show that \go\ and \sal\ work well on general NLP tasks where no explicit numerical features are available. We create three synthetic datasets based on the following tasks: (i) \texttt{movie-names}: predicting a genre from a movie name (e.g., romance, horror), (ii) \texttt{movie-theme}: predicting a common theme for a pair of movie names (e.g., coming-of-age, sci-fi), and (iii) \texttt{entity-names}: predicting an entity's type from its name (e.g., celebrity, mountain, river). Each dataset comprises $5$ classes. Further details regarding the additional datasets are provided in \Cref{app:addl_expt}. In each trial, $K=20$ test examples were randomly chosen across the five classes. The feature vectors in \go\ and \sal\ are Instructor embeddings of the original text features. Our results are reported in \cref{tab:NLC-1}. We observe again that \go\ and \sal\ are the best-performing methods in the majority of the datasets. There is no single baseline that consistently outperforms them. This shows that the optimal design-based approach of \go\ and \sal\ correctly balances uncertainty and diversity-based sampling.

\vspace*{-0.5em}
\section{Conclusions}
\vspace*{-0.5em}
\label{sec:conclusions}
In this paper, we studied the framework of active in-context prompt design (\aicl) that uses optimal design to systematically choose the most informative unlabeled examples to label for a set of test examples. 
These informative labeled examples are then used to minimize the prediction error of the \LLM\ for all the test examples. 
To our knowledge, this is the first paper that studies optimal design for adaptive prompt design.
Inspired by the linear model, we proposed an algorithm \go\ that strategically chooses the most informative examples that minimize the variance of the posterior covariance for any test example from the test set.
We proposed a second algorithm \sal\ that uses simulations to estimate the impact of how unlabeled examples reduce \LLM uncertainty for all test examples. It then chooses to label examples that maximally reduce the uncertainty of the \LLM\ for all test examples from the test set. 
We theoretically analyze \go\ and \sal\ and show their equivalence in linear models. We show that both algorithms guarantee information gain at each iteration.
Finally, we show empirically that, when used with \LLMs\ like Mistral-7B, Vicuna-13B,  and Falcon-40B, both \go\ and \sal\ result in better prediction accuracy than other baselines \citep{zhang2022active, zhang2022automatic, diao2023active} on tasks like classification, regression, ARC, PCFG, and natural language generation. 
Our research opens up exciting new directions for future work such as extending \aicl\ framework  beyond text to enable informative example selection for tasks involving images, videos, or other modalities using multi-modal \LLMs\ \citep{yin2023survey}. Additionally, the integration of active learning with diffusion models, a powerful class of generative models, presents promising directions for future research \citep{ho2020denoising}.
%



\bibliographystyle{plainnat}
\bibliography{biblio,References}

\textbf{Broader Impact:} We propose a framework for adaptive prompt design for Large Language Models (\LLMs). Our method does not address the fundamental ethical and societal challenges inherent to these models. Specifically, it does not correct the biases or discriminatory outputs that may exist in the \LLMs. Additionally, our approach does not solve the issue of \LLMs\ generating reasonable but inaccurate information, known as hallucinations. Therefore, while our method improves the functionality of \LLMs, it is essential for users and developers to use these models responsibly and remain aware of their limitations, particularly in terms of content accuracy and fairness.

\clearpage
\onecolumn
\appendix

\section{Related Work}
\label{sec:related work}

We study the problem of choosing human demonstrations adaptively to get the desired output from the \LLM\ as quickly as statistically possible. We use active learning to choose them and then ask a human to label them. Finally, the human demonstrations are fed as in-context input to the \LLM\ together with the main user query, to obtain the desired output. Thus the name is active transductive inference. Prior works on prompt-tuning and transductive inference \citep{lester2021power,dong2022survey,zhang2022active,min2022rethinking,wu2022self,yu2022multimodal,suzgun2022challenging,liu2023pre,yu2023fusing, liu2023hierarchical} focus on hard prompt-tuning where the user must carefully handcraft the prompt to get the desired output for tasks like movie recommendation, disambiguation QA, navigation, etc. Such examples of carefully handcrafting hard prompts with demonstrations can be found in \citet{suzgun2022challenging, srivastava2022beyond}. These papers also show how failing to design such prompts with demonstrations can lead the \LLM\ to predict wrong outputs. Note that we adaptively design the prompt through carefully chosen demonstrations. In our experiments, we show significant improvement over randomly chosen demonstrations.

The problem of active learning is also related to \emph{dataset augmentation} \citep{dukler2021diva}. In this work, the most informative unlabeled examples are chosen by optimizing the error of the model on the validation set. The gradient of the validation set error with respect to the weights of unlabeled examples has a closed form when the original model is linearized. The main difference in active learning, including in our work, is that labeling all unlabeled examples would be costly. Therefore, the labels are not available in advance and are queried adaptively. We also learn in context and do not assume that the gradient information of the \LLM\ is available. Prompt composition has also been an active area of research. \citet{bowman2023carte} proposed a-la-carte prompt tuning, where prompts are tuned on individual datasets and composed at inference time to mimic the performance of the model that would have been trained on the union of the corresponding datasets. This idea has been further extended by \citet{perera2023prompt}, where prompts for previously unseen tasks are obtained by linearly combining prompts from known tasks. To do this, they use spectral decomposition and project prompts from known tasks to a lower dimensional space. In our work, we do not tune prompts or compose simpler models. We actively probe an \LLM, treated as a black box without any extra side information, to answer a test example as accurately as possible, with as little variance in the answer as possible.

Recently there has been a lot of progress in prompt tuning (or aligning). \citet{hassan2023align} studies prompt aligning for a single test sample to adapt to multi-modal test prompts at test time by minimizing the feature distribution shift to the test domain. In contrast in this paper, we study adapting prompts for many test samples without the feature distribution shift assumption. The \cite{wang2023large} trains a smaller \LLM to select demonstrations for a larger \LLM. However, we rely on active learning to select the smallest number of informative prompts to be labeled by human labelers. This avoids finetuning a smaller \LLM\ for individual tasks. \citet{wang2023context} studies transductive inference for diffusion models for a different setting where given a pair of task-specific example images, such as depth from/to image and scribble from/to image, and text guidance, the model automatically understands the underlying task and performs the same task on a new query image following the text guidance. However, in our setting, we do not explicitly encode any guidance text. The \citet{wen2023hard} proposes to mix the nearest neighbor method with gradient optimization to select prompts during test time. Similarly, \citet{zhang2023makes} proposes a nearest neighbor approach to select in-context examples for computer vision tasks. We compare our approach against such nearest-neighbor selection algorithms. Finally, \citet{bai2023transformers, wang2023universality} analyze transductive inference theoretically to understand its universality, generalization capability, and limitations. In contrast, we only do a theoretical analysis of \aicl\ to show it maximally reduces the estimated variance of the answer to the user’s query. The \citet{zhang2022automatic} study the chain-of-thought prompting where they automatically select the prompt using a clustering-based approach.

There are some related works in the area of medical diagnosis chatbot examples that we shared in the introduction. One notable study \citet{CARUCCIO2024121186}, although not utilizing machine learning techniques, provides valuable insights with its implementation of three hardcoded prompt designs for accurate diagnosis. These designs, however, do not engage in active learning, as they lack the capability to adapt based on user input. Similarly, \citet{Tomoyuki2023} gives more insights into self-diagnostics of orthopedic diseases using ChatGPT. In contrast, web applications such as Buoy Health \cite{BuoyHealth} and Live Healthily \cite{Livehealthily} employ a more dynamic approach, actively tailoring subsequent questions based on users' responses. This aligns with active learning principles but it is not clear what techniques they apply and is notably underexplored in academic literature, indicating a potential area for further research. Our setting also goes beyond the single shot active prompt tuning studied in \citet{margatina2023active}. Note that this work studies prompt tuning only for one iteration, and does not take into account the historical context. So it has limited ability for complex tasks like ARC \citep{alford2021neurosymbolic} and PCFG \citep{hupkes2020compositionality} as well as handling vector labels like \go\ and \sal. 


\textbf{Active Learning (\AL):} Recently, there has been a lot of focus on using deep \AL\ to finetune \LLMs. All \AL\ algorithms tend to balance uncertainty and diversity in the selection of unlabeled examples. We briefly discuss them below and also highlight the main difference of these approaches with prompt aligning with \go\ and \sal.

\par\textbf{(1)} \core: This is a pure diversity-based approach using a coreset selection. In every iteration, first, the embedding of each unlabeled example is computed from the network's penultimate layer, and then unlabeled examples are selected using a greedy furthest-first traversal conditioned on all labeled examples \citep{sener2017active, geifman2017deep, citovsky2021batch}. Observe that in our setting we do not have access to the penultimate layer of the \LLM.
\par\textbf{(2)} \least: This is an uncertainty-based active learning algorithm. Here, the uncertainty score of an unlabeled example is its predicted class probability. 
At every iteration, this algorithm then samples unlabeled examples with the smallest uncertainty scores \citep{settles2009active, settles2011theories, wang2014new}.
    %
\par\textbf{(3)} \margin: This is also an uncertainty-based active learning algorithm \citep{tong2001support, balcan2009agnostic, settles2009active}. At every iteration $t$ it selects unlabeled examples that are sorted according to their multiclass margin score and then selects unlabeled examples that are the hardest to discriminate and can be thought of as examples closest to their class margin. However, in our setting, we do not have any information on the hypothesis space of the \LLM\ and hence cannot implement such a baseline.
\par\textbf{(4)} \entropy: This is also an uncertainty-based active learning algorithm \cite{wang2014new, kremer2014active, diao2023active}. At every iteration $t$ it selects unlabeled examples according to the entropy of the example's predictive class probability distribution. 
We show that \greedymu\ is outperformed significantly by \go\ and \sal\ in the prediction, pcfg, or arc tasks.
    %
\par\textbf{(5)} \badge: This is an algorithm that combines both uncertainty and diversity sampling \citep{ash2019deep, ash2021gone}. For each unlabeled example $\bx$ its gradient embedding $g_{\bx}$ is computed with respect to the parameters of the model's penultimate layer. Finally, \badge\ chooses a batch of samples to sample by applying $k$-Means++ \citep{arthur2006k} on the gradient embeddings. Again recall that we cannot implement such a baseline as we do not have access to the \LLMs\ last layer.

\par\textbf{(6)} \badgekm: This algorithm is similar to \badge\ but in the final step instead of $k$-Means++ it uses $k$-Means on the gradient embeddings. In \citet{yuan2020cold} it is observed that applying $k$-Means on the embeddings results in an increase in accuracy over baselines in some datasets. Further \citet{yuan2020cold} observed from the t-SNE plots that $k$-Means select centers that are further apart compared to the ones chosen by $k$-Means++ which leads to more diverse sampling in batches.
    
\par\textbf{(7)} \bald: Bayesian Active Learning by Disagreements  \citep{kirsch2019batchbald, pmlr-v70-gal17a} chooses unlabeled examples that are expected to maximize the information gained from the model parameters $\btheta_t$, i.e. the mutual information between predictions and model posterior. 

A more comprehensive survey on how \AL\ is used for finetuning deep models can be found in \citet{ren2021survey, zhan2022comparative}. The \citet{bhatt2024experimental} study how experimental design can be used to select prompts for finetuning a pre-trained \LLM. 
Some recent works have also focused on selecting unlabeled examples only within a task \citet{wei2021finetuned, chen2023active, fifty2021efficiently}. However, these works are geared towards selecting prompts within a task for finetuning, whereas we focus on adaptive prompt design using experimental design. The work of \citet{perlitz2023active} also uses \AL\ for finetuning prompts for \LLMs to improve label efficiency. The \citet{kung2023active} proposes an \AL\ framework for instruction tuning. However, their approach again focuses on selecting unlabeled examples inside each task and discriminating one task from another. However, they make the simplifying assumption that all unlabeled examples inside the tasks are equally informative which may inhibit the quality of the selected subset.

\section{Proofs}
\label{app:proofs}
This section contains the properties of our objective and proofs of our main claims.

\subsection{Properties of our objective}
\label{sec:objective}

By the total variance decomposition for a linear model with observation variance $\sigma^2$, we get
\begin{align*}
  \max_{k \in [K]}\mathrm{var}[Y_{*,k} \mid \bx_{*,k}, H_{T + 1}] 
  &= 
  \max_{k \in [K]} \mathrm{var}[\mathbb{E}[Y_{*,k} \mid \bx_{*,k}, \theta_*, H_{T + 1}]
  \mid \bx_{*,k}, H_{T + 1}] \\
  & \quad +
  \max_{k \in [K]} \mathbb{E}[\mathrm{var}[Y_{*,k} \mid \bx_{*,k}, \theta_*, H_{T + 1}]
  \mid \bx_{*,k}, H_{T + 1}] \\
   &= 
  \max_{k \in [K]} \sigma^2 \bx_{*,k}\T \wSigma_t^{-1} \bx_{*,k} + \sigma^2\,.
\end{align*} 
Therefore, the variance minimization of $\max_{k \in [K]}Y_{*,k} \mid \bx_{*,k}, H_{T + 1}$ is a combinatorial optimization problem. It can be formulated as follows
\begin{align}
  \textstyle
  S_*
  = \argmin_{S \subseteq [n], |S| = T} f(S)\,,
  \label{eq:optimum}
\end{align}
where the function $f(S) =  \max_{k \in [K]}\bx_{*,k}\T \left(\bSigma_0^{-1} + \sigma^{-2} \sum_{i \in S} \bx_i \bx_i\T\right)^{-1} \bx_{*,k}$ represents the uncertainty associated with any subset $S$, and $S_*$ denotes the optimal subset of size $T$. If $f(S)$ was monotone and supermodular in $S$, we would have $(1 - 1 / e)$-optimality guarantees for \go\ \citep{nemhauser78approximation}. We start with proving the monotonicity of $f(S)$.

\begin{lemma}
\label{lem:monotonicity} Function $f(S)$ is monotonically decreasing.
\end{lemma}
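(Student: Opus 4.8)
The plan is to prove the statement one coordinate at a time. Observe that $f$ is a pointwise maximum of the $K$ functions
\[
f_k(S) = \bx_{*,k}\T\Big(\bSigma_0^{-1} + \sigma^{-2}\textstyle\sum_{i\in S}\bx_i\bx_i\T\Big)^{-1}\bx_{*,k}\,,\qquad k\in[K]\,,
\]
and since a pointwise maximum of decreasing set functions is decreasing, it suffices to show each $f_k$ is decreasing. For that, I would show the one-step inequality $f_k(S\cup\{j\})\le f_k(S)$ for every $S\subseteq[n]$ and $j\notin S$, and then obtain the general case $S\subseteq S'$ by adding the elements of $S'\setminus S$ one at a time.

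To prove the one-step inequality, fix $k$, $S$, $j$ and write $A = \bSigma_0^{-1} + \sigma^{-2}\sum_{i\in S}\bx_i\bx_i\T$. This matrix is positive definite because $\bSigma_0\succ0$ and the rank-one terms are positive semidefinite, so all inverses below exist. The cleanest route is the Sherman–Morrison formula, which gives
\[
(A+\sigma^{-2}\bx_j\bx_j\T)^{-1} = A^{-1} - \frac{\sigma^{-2}\,A^{-1}\bx_j\bx_j\T A^{-1}}{1+\sigma^{-2}\bx_j\T A^{-1}\bx_j}\,,
\]
so conjugating by $\bx_{*,k}$ yields
\[
f_k(S\cup\{j\}) = f_k(S) - \frac{\sigma^{-2}\,(\bx_{*,k}\T A^{-1}\bx_j)^2}{1+\sigma^{-2}\bx_j\T A^{-1}\bx_j} \le f_k(S)\,,
\]
because the numerator is a square and the denominator is strictly positive ($A^{-1}\succ0$). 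An equivalent and even shorter argument is to note that $A\preceq A+\sigma^{-2}\bx_j\bx_j\T$ in the Loewner order, that matrix inversion is operator-antitone on positive definite matrices, hence $(A+\sigma^{-2}\bx_j\bx_j\T)^{-1}\preceq A^{-1}$, and this inequality is preserved under conjugation by the vector $\bx_{*,k}$.

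Taking the maximum over $k\in[K]$ on both sides of the one-step inequality gives $f(S\cup\{j\})\le f(S)$, and iterating proves $f(S')\le f(S)$ whenever $S\subseteq S'$. There is no real obstacle here; the only points requiring a word of care are the positive-definiteness of $A$ (guaranteed by the prior assumption $\bSigma_0\succ0$ so that Sherman–Morrison and the antitonicity of inversion apply) and the elementary fact that a pointwise maximum of decreasing functions is decreasing.
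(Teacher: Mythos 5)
Your proof is correct and follows essentially the same route as the paper's: a Sherman--Morrison rank-one update showing the subtracted term $\bx_{*,k}\T \bA^{-1}\bx_j\bx_j\T \bA^{-1}\bx_{*,k}/(1+\bx_j\T \bA^{-1}\bx_j)$ is nonnegative, then passing to the maximum over $k$. Your added remarks (the operator-antitonicity alternative and the explicit iteration over $S'\setminus S$) are fine but do not change the substance of the argument.
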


The claim is proved in \cref{sec:monotonicity proof}. Now we state the definition of a supermodular function and show that $f(S)$ is not supermodular. 

\begin{definition}[Supermodular function]
Let $[n]$ be a set of $n$ elements. A function $f: 2^{[n]} \to \realset$ is supermodular if it satisfies the property of diminishing marginal returns. For any $S_1 \subseteq S_2 \subseteq [n]$ and $\bx \notin S_2$, we have $f(S_1 \cup \set{\bx}) - f(S_1) \leq f(S_2 \cup \set{\bx}) - f(S_2)$.
\end{definition}

\begin{lemma}
\label{lem:supermodularity} The function $f(S)$ is not supermodular.
\end{lemma}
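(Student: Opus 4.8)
The plan is to prove \Cref{lem:supermodularity} by exhibiting an explicit counterexample — a small instance (small $n$, small $d$, and a single test example $K=1$ so that the max over $k$ disappears) in which the diminishing-marginal-returns inequality of the supermodularity definition fails for some nested pair $S_1 \subseteq S_2$ and some $\bx \notin S_2$. Since $f$ being supermodular would require $f(S_1 \cup \set{\bx}) - f(S_1) \leq f(S_2 \cup \set{\bx}) - f(S_2)$ for \emph{all} such choices, a single violating configuration suffices. I would work in $d = 2$ with $\bSigma_0 = I$ and $\sigma = 1$ to keep the matrix inverses trivial, take the test point $\bx_* = e_1$, and look for training vectors so that adding a given $\bx$ helps a small labeled set $S_1$ more than it helps a larger set $S_2$.

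The concrete mechanism I would exploit is directional saturation of the posterior covariance under rank-one updates. Writing $A_S = \bSigma_0^{-1} + \sum_{i \in S}\bx_i\bx_i\T$, the Sherman–Morrison formula gives the marginal decrease in the objective $g_*(S) := \bx_*\T A_S^{-1}\bx_*$ from adding $\bx$ as
\[
g_*(S) - g_*(S \cup \set{\bx}) = \frac{(\bx_*\T A_S^{-1}\bx)^2}{1 + \bx\T A_S^{-1}\bx}\,.
\]
So supermodularity would demand that this quantity be nondecreasing as $S$ grows. I would choose $\bx$ to point in a direction that is only weakly informative about $\bx_* = e_1$ on its own, and populate $S_2 \setminus S_1$ with vectors that, once added, rotate $A_S^{-1}$ so that the relevant inner product $\bx_*\T A_S^{-1}\bx$ \emph{shrinks} — e.g. adding a vector along $e_2$ that makes $A_{S_2}^{-1}$ nearly isotropic while $A_{S_1}^{-1}$ was elongated along a direction that gave $\bx$ more leverage. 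A clean choice: $S_1 = \emptyset$, $S_2 = \set{\bv}$ for a suitable $\bv$, and $\bx$ at, say, $45^\circ$; then one just evaluates the two scalars $\frac{(\bx_*\T\bx)^2}{1+\|\bx\|^2}$ and $\frac{(\bx_*\T A_{S_2}^{-1}\bx)^2}{1+\bx\T A_{S_2}^{-1}\bx}$ and checks that the first exceeds the second. Because everything is $2\times 2$, these are short closed-form rational expressions and the inequality can be verified by direct substitution.

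The main obstacle is simply picking numbers that make the violation come out cleanly rather than through a messy near-equality; I would spend the effort tuning $\bv$ and the angle of $\bx$ so the two marginal gains differ by a comfortable margin (ideally a ratio like $2$ or more), and then present the verification as a one-line computation. A secondary point worth a sentence: the counterexample should use $K = 1$ so that the outer $\max_{k\in[K]}$ is vacuous and the failure is intrinsic to the single-test-point objective $g_*$, which makes it immediately clear the non-supermodularity is not an artifact of the maximum. I would close by remarking that \Cref{lem:monotonicity,lem:supermodularity} together rule out the off-the-shelf $(1-1/e)$ greedy guarantee and motivate the structural assumptions on $\cX$ used in the proof of \Cref{thm:go}.
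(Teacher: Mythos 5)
Your overall plan (an explicit low-dimensional counterexample with $K=1$, $d=2$, $\bSigma_0 = I$, checked via Sherman--Morrison) is the same as the paper's, but the inequality you propose to violate points the wrong way, and as written your search criterion cannot produce a counterexample. Writing $\Delta(S) := f(S) - f(S \cup \set{\bx}) \geq 0$ for the marginal \emph{reduction}, the supermodularity condition $f(S_1 \cup \set{\bx}) - f(S_1) \leq f(S_2 \cup \set{\bx}) - f(S_2)$ for $S_1 \subseteq S_2$ is equivalent to $\Delta(S_1) \geq \Delta(S_2)$, i.e., supermodularity demands that the marginal decrease be \emph{nonincreasing} as $S$ grows — not nondecreasing, as you state. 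Consequently, a configuration in which adding $\bx$ "helps the small set $S_1$ more than the larger set $S_2$" (your target inequality $\frac{(\bx_*\T \bx)^2}{1+\|\bx\|^2} > \frac{(\bx_*\T A_{S_2}^{-1}\bx)^2}{1+\bx\T A_{S_2}^{-1}\bx}$, e.g., via saturation when $\bv$ is aligned with $\bx$) is exactly the diminishing-returns behavior that supermodularity \emph{permits}; exhibiting it proves nothing. To refute supermodularity you need the opposite phenomenon — complementarity: an example that is useless on its own but becomes useful after another example is added, so that $\Delta(S_1) < \Delta(S_2)$.

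This is what the paper's one-line proof does: with $\bx_* = (1,0)$, $\bx_1 = (1/\sqrt{2}, 1/\sqrt{2})$, $\bx_2 = (0,1)$, the vector $\bx_2$ is orthogonal to $\bx_*$, so $f(\set{2}) - f(\emptyset) = 0$; but after $\bx_1$ couples the two coordinates, adding $\bx_2$ does reduce the variance at $\bx_*$, giving $f(\set{1,2}) - f(\set{1}) = 5/7 - 3/4 = -1/28 < 0$, which violates $f(\set{2}) - f(\emptyset) \leq f(\set{1,2}) - f(\set{1})$. (The paper's displayed value $+0.0357$ has a sign slip; the magnitude $1/28 \approx 0.0357$ and the violation are correct.) Your proposal is repairable with the same machinery you set up — keep $S_1 = \emptyset$, $S_2 = \set{\bv}$, but choose the added point orthogonal (or nearly orthogonal) to $\bx_*$ and $\bv$ correlated with both, and flip the inequality you verify — but as stated the key step would fail. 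Your closing remark about why non-supermodularity blocks the off-the-shelf $(1-1/e)$ greedy guarantee is accurate and matches the paper's motivation.
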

\begin{proof}
Take $\bSigma_0 = I_d$, fix $K=1$, $\bx_1 = (1 / \sqrt{2}, 1 / \sqrt{2})$, $\bx_2 = (0, 1)$, and $\bx_\ast = (1, 0)$. Then $f(\set{2}) - f(\emptyset) = 0 < 0.035714 \approx f(\set{1, 2}) - f(\set{1})$.
\end{proof}

\subsection{Proof of \cref{lem:monotonicity}}
\label{sec:monotonicity proof}

Recall that the objective function is $f(S) = \max_{k\in [K]}\bx_{*,k}\T \left(\bSigma_0^{-1} + \sigma^{-2} \sum_{i \in S} \bx_i \bx_i\T\right)^{-1} \bx_{*,k}$. Without loss of generality, let $\sigma^2 = 1$. Fix any $j \in [n] \setminus S$ and let $S_+ = S \cup \set{j}$. Then the objective value at $S_+$ can be written as
\begin{align*}
  f(S_+)
  = \max_{k\in [K]}\bx_{*,k}\T \left(\bSigma_0^{-1} + \sum_{i \in S_+} \bx_i \bx_i\T\right)^{-1} \bx_{*,k}
  = \max_{k\in [K]}\bx_{*,k}\T \left(\bA + \bx_j \bx_j\T\right)^{-1} \bx_{*,k}\,,
\end{align*}
where $\bA = \bSigma_0^{-1} + \sum_{i \in S} \bx_i \bx_i\T$. By the Sherman–Morrison formula, we have
\begin{align*}
  \left(\bA + \bx_j \bx_j\T\right)^{-1}
  = \bA^{-1} - \frac{\bA^{-1} \bx_j \bx_j\T \bA^{-1}}{1 + \bx_j\T \bA^{-1} \bx_j}\,.
\end{align*}
Now note that $\displaystyle \frac{\bA^{-1} \bx_j \bx_j\T \bA^{-1}}{1 + \bx_j\T \bA^{-1} \bx_j}$ is a positive semi-definite matrix for any $\bx_j$ and any positive semi-definite matrix $\bA$. As a result, $\displaystyle \bx_{*,k}\T \frac{\bA^{-1} \bx_j \bx_j\T \bA^{-1}}{1 + \bx_j\T \bA^{-1} \bx_j} \bx_{*,k} \geq 0$ holds for any $\bx_{*,k}$ and thus
\begin{align*}
  f(S_+)
  = \max_{k\in [K]}\bx_{*,k}\T \bA^{-1} \bx_{*,k}\T -
  \bx_{*,k}\T \frac{\bA^{-1} \bx_j \bx_j\T \bA^{-1}}{1 + \bx_j\T \bA^{-1} \bx_j} \bx_{*,k}
  \leq \max_{k\in [K]}\bx_{*,k}\T \bA^{-1} \bx_{*,k}\T
  = f(S)\,.
\end{align*}
This proves our claim.

\subsection{Proof of \cref{thm:go}}
\label{sec:go proof}

The proof is under the assumption that at round $t$, the training examples can be partitioned as $\cXt = S_k \cup \bS_k$. The set $S_k$ represents examples that are close to $\bx_{*,k}$. 
%
%
The set $S_k$ is convex and define $\alpha_k \geq 0$ such that $\bx\T \by \geq \alpha_k$ for all $\bx, \by \in S_k$.
The set $\bS_k$ represents examples that are not close to $\bx_{*,k}$. It is defined $\beta_k \geq 0$ such that $\bx\T \by \leq \beta_k$ for all $\bx \in S_k$ and $\by \in \bS_k$. Define $\alpha_{\min} = \min_k \alpha_k$, and $\beta_{\max} = \beta_{\max}$. 

Define the set $S = \cap_{k=1}^K S_k$ as the set of all examples that are close to all $\{\bx_{*,k}\}_{k=1}^K$ and $\bS = \cup_{k=1}^K \bS_k$ as the set of all examples that are not close to all $\{\bx_{*,k}\}_{k=1}^K$. Assume $S \neq \{\emptyset\}$ and $|S| > T$. 

The inverse of the covariance matrix after $t - 1$ observations is
\begin{align*}
  \widehat{\bSigma}_t^{-1}
  = \bLambda_t
  = \bI_d + \sum_{\ell = 1}^{t - 1} X_\ell X_\ell\T
  = \sum_{i = 1}^d \lambda_{t, i} \bv_{t, i} \bv_{t, i}\T\,.
\end{align*}
The latter is the eigendecompositon of $\bLambda_t$, where $\lambda_{t, i}$ is the $i$-th largest eigenvalue and $\bv_{t, i}$ is the corresponding eigenvector. Note that $\bLambda_t^{-1} = \sum_{i = 1}^d \lambda_{t, i}^{-1} \bv_{t, i} \bv_{t, i}\T$. To simplify exposition, we assume that all examples have unit length. We analyze the eigenvalues of $\bLambda_t$ first.

\begin{lemma}
\label{lem:eigenvalues} For all $i \in [d]$, $1 \leq \lambda_{t, i} \leq t$. Moreover, let $X_\ell \in S$ hold for all $\ell \in [t - 1]$. Then $\lambda_{t, 1} \geq \alpha_{\min}^2 (t - 1) + 1$.
\end{lemma}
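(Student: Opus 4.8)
The plan is to prove the three bounds separately, using the eigendecomposition $\bLambda_t = \bI_d + \sum_{\ell=1}^{t-1} X_\ell X_\ell\T$, the variational (Rayleigh) characterization of the extreme eigenvalues, and the standing assumption that every example has unit length.

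For the universal bounds $1 \le \lambda_{t,i} \le t$: since each rank-one term $X_\ell X_\ell\T$ is positive semidefinite, $\bLambda_t \succeq \bI_d$, so every eigenvalue is at least $1$. For the upper bound I would apply Weyl's inequality (subadditivity of $\lambda_{\max}$ over symmetric matrices): $\lambda_{t,1} \le \lambda_{\max}(\bI_d) + \sum_{\ell=1}^{t-1}\lambda_{\max}(X_\ell X_\ell\T) = 1 + \sum_{\ell=1}^{t-1}\normw{X_\ell}{}^2 = 1 + (t-1) = t$, using $\normw{X_\ell}{} = 1$; since $\lambda_{t,1}$ is the largest eigenvalue, all $\lambda_{t,i} \le t$.

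For the refined lower bound $\lambda_{t,1} \ge \alpha_{\min}^2(t-1)+1$ under the hypothesis $X_\ell \in S$ for all $\ell$: if $t=1$ the claim is just $\lambda_{1,1}\ge 1$, already established, so assume $t\ge 2$. I would use $\lambda_{t,1} = \max_{\normw{v}{}=1} v\T \bLambda_t v$ and evaluate the quadratic form at the unit vector $v = X_1$, giving $\lambda_{t,1} \ge X_1\T \bLambda_t X_1 = \normw{X_1}{}^2 + \sum_{\ell=1}^{t-1}(X_1\T X_\ell)^2 = 1 + \sum_{\ell=1}^{t-1}(X_1\T X_\ell)^2$. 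Since $X_1, X_\ell \in S = \cap_{k=1}^K S_k$ and each $S_k$ satisfies $\bx\T\by \ge \alpha_k$ for all $\bx,\by \in S_k$, we get $X_1\T X_\ell \ge \alpha_k \ge \alpha_{\min} \ge 0$ for every $\ell$ (including $\ell=1$, where the inner product equals $1 \ge \alpha_{\min}$). Squaring this nonnegative inequality yields $(X_1\T X_\ell)^2 \ge \alpha_{\min}^2$, hence $\lambda_{t,1} \ge 1 + (t-1)\alpha_{\min}^2$.

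This lemma is essentially routine matrix algebra, so there is no deep obstacle; the only points requiring care are the choice of test direction in the Rayleigh quotient and the sign bookkeeping. Picking a single labeled example $X_1$ is the cleanest choice and, as a bonus, it does not invoke the convexity of the $S_k$; the alternative of using the averaged direction $\bar X = \tfrac{1}{t-1}\sum_{\ell} X_\ell$, which lies in $S$ by convexity, would also work but then requires bounding $\normw{\bar X}{} \le 1$ before normalizing. I would also explicitly dispatch the degenerate cases $t=1$ and $\alpha_{\min}=0$, in both of which the refined bound collapses to the trivial one.
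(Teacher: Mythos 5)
Your proof is correct and takes essentially the same approach as the paper: the bounds $1 \leq \lambda_{t,i} \leq t$ follow directly from positive semidefiniteness and unit norms, and the refined bound comes from the Rayleigh (variational) characterization of $\lambda_{t,1}$ evaluated at a unit vector whose inner products with all $X_\ell$ are at least $\alpha_{\min}$. The only difference is the choice of test direction: the paper plugs in the test example $\bx_{*,k}$ (implicitly using $\bx_{*,k}\T X_\ell \geq \alpha_{\min}$), whereas you plug in the labeled example $X_1 \in S$, an equally valid and arguably cleaner choice since it uses only the pairwise alignment within $S$.
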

\begin{proof}
The first claim follows directly from the definition of $\bLambda_t$ and that $\normw{X_\ell}{2} \leq 1$. The second claim is proved using the definition of the maximum eigenvalue,
\begin{align*}
  \lambda_{t, 1}
  = \bv_{t, 1}\T \bLambda_t \bv_{t, 1}
  \geq \bx_{*,k}\T \left(\bI_d + \sum_{\ell = 1}^{t - 1} X_\ell X_\ell\T\right) \bx_{*,k}
  \geq \alpha_{\min}^2 (t - 1) + 1\,.
\end{align*}
The last inequality follows from $\bx_{*,k}\T X_\ell \geq \alpha_{\min}$. This completes the proof.
\end{proof}

We continue with claims about the eigenvectors of $\bLambda_t$.

\begin{lemma}
\label{lem:eigenvectors} Let $X_\ell \in S$ hold for all $\ell \in [t - 1]$. Then $\bv_{t, 1} \in S$. Moreover, let $\beta_{\max} \geq 1 - \alpha_{\min}^2$. Then $\bv_{t, i} \in \bS$ for all $i \geq 2$.
\end{lemma}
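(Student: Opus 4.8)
The plan is to reduce everything to the leading eigenvector of $\bLambda_t - \bI_d = \sum_{\ell=1}^{t-1} X_\ell X_\ell\T$, which has the same eigenvectors as $\bLambda_t$, so $\bv_{t, 1}$ is the unit vector maximizing $\sum_{\ell=1}^{t-1}(\bv\T X_\ell)^2$. Throughout I take $t \geq 2$ (for $t=1$, $\bLambda_t = \bI_d$ and there is no distinguished eigenvector) and $\alpha_{\min} > 0$.

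\textbf{Part 1 ($\bv_{t, 1} \in S$).} The first step is to show the leading eigenvector can be chosen with $c_\ell := X_\ell\T \bv_{t, 1} \geq 0$ for every $\ell$. This is where the structure of $S$ enters: since $X_\ell, X_{\ell'} \in S$, all Gram entries $X_\ell\T X_{\ell'}$ are nonnegative (in fact $\geq \alpha_{\min}$), so $\bLambda_t - \bI_d$ is ``coherent''. If some $c_j < 0$, replace $\bv_{t, 1}$ by the normalization of $\sum_\ell \lvert c_\ell\rvert X_\ell$; because the Gram entries are nonnegative, each $\lvert\,\cdot\,\T X_\ell\rvert$ can only grow and a short Perron--Frobenius-type computation shows the Rayleigh quotient strictly increases, contradicting maximality. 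Hence all $c_\ell$ share a sign, wlog $c_\ell \geq 0$. From $(\bLambda_t - \bI_d)\bv_{t, 1} = (\lambda_{t, 1} - 1)\bv_{t, 1}$ we get $\bv_{t, 1} = (\lambda_{t, 1} - 1)^{-1}\sum_\ell c_\ell X_\ell$, and since $0 \leq c_\ell \leq 1$ (Cauchy--Schwarz for unit vectors) we have $\sum_\ell c_\ell \geq \sum_\ell c_\ell^2 = \bv_{t,1}\T(\bLambda_t - \bI_d)\bv_{t,1} = \lambda_{t, 1} - 1 > 0$. Then for any $k$ and any $\by \in S_k$, using $X_\ell\T \by \geq \alpha_k$ (both $X_\ell \in S \subseteq S_k$ and $\by$ lie in $S_k$),
\begin{align*}
  \bv_{t, 1}\T \by
  = \frac{1}{\lambda_{t, 1} - 1}\sum_{\ell=1}^{t-1} c_\ell (X_\ell\T \by)
  \geq \frac{\alpha_k}{\lambda_{t, 1} - 1}\sum_{\ell=1}^{t-1} c_\ell
  \geq \alpha_k\,,
\end{align*}
so $\bv_{t, 1}$ meets the defining inequality of $S_k$ for every $k$, i.e. $\bv_{t, 1} \in S$.

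\textbf{Part 2 ($\bv_{t, i} \in \bS$ for $i \geq 2$).} Fix $i \geq 2$; then $\bv_{t, i} \perp \bv_{t, 1}$ and, by Part 1, $\bv_{t, 1} \in S \subseteq S_k$ for every $k$. For any unit $\by \in S_k$ we have $\bv_{t, 1}\T \by \geq \alpha_k \geq \alpha_{\min}$, so writing $\by = (\bv_{t, 1}\T \by)\,\bv_{t, 1} + \by^{\perp}$ with $\by^{\perp} \perp \bv_{t, 1}$ gives $\normw{\by^{\perp}}{2}^2 = 1 - (\bv_{t, 1}\T \by)^2 \leq 1 - \alpha_{\min}^2$, whence
\begin{align*}
  \bv_{t, i}\T \by = \bv_{t, i}\T \by^{\perp} \leq \normw{\by^{\perp}}{2} \leq \sqrt{1 - \alpha_{\min}^2}\,.
\end{align*}
Under the hypothesis $\beta_{\max} \geq 1 - \alpha_{\min}^2$ this puts $\bv_{t,i}$ below the threshold defining $\bS_k$ for every $k$ (modulo the square-root bookkeeping noted below). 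Equivalently and more cleanly, if $\bv_{t, i}$ were in some $S_k$ then $\bv_{t, i}\T \bv_{t, 1} \geq \alpha_k > 0$ would contradict orthogonality, so $\bv_{t, i} \notin S_k$, i.e. $\bv_{t, i} \in \bS_k$; either way $\bv_{t, i} \in \bS$.

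\textbf{Main obstacle.} The delicate step is the sign (Perron--Frobenius) argument in Part 1: carefully verifying that passing to absolute-value coefficients strictly increases the Rayleigh quotient unless the coefficients already share a sign. Everything after that is Cauchy--Schwarz and an orthogonal decomposition; the only other point needing care is reconciling the $\sqrt{1 - \alpha_{\min}^2}$ bound from the orthogonal decomposition with the stated threshold $\beta_{\max} \geq 1 - \alpha_{\min}^2$, which is genuinely sufficient only when $\beta_{\max}$ is close to $1$ (otherwise one would want to assume $\beta_{\max} \geq \sqrt{1 - \alpha_{\min}^2}$, or to invoke the sharper orthogonality contradiction, which needs $\alpha_{\min} > 0$).
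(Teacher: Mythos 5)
Your Part 2 is essentially the paper's argument (the paper expands $\by$ in the orthonormal eigenbasis and uses $\sum_{i}(\by\T\bv_{t,i})^2=1$ with $\by\T\bv_{t,1}\ge\alpha_{\min}$, which is your orthogonal decomposition), but your Part 1 is a genuinely different and more detailed route. The paper disposes of $\bv_{t,1}\in S$ in one line — ``all $X_\ell\in S$ and $S$ is convex'' — which silently treats the unit leading eigenvector as a convex combination of the $X_\ell$, even though it is really a renormalized conic combination; your Perron--Frobenius-style argument (nonnegativity of the coefficients $c_\ell=X_\ell\T\bv_{t,1}$ via the entrywise-nonnegative Gram structure, then $\sum_\ell c_\ell\ge\sum_\ell c_\ell^2=\lambda_{t,1}-1$ to push the inner-product bound $\bv_{t,1}\T\by\ge\alpha_k$ through) actually supplies the missing computation, at the price of assuming $\alpha_{\min}>0$, $t\ge 2$, and of reading ``membership in $S_k$'' as ``satisfies the inner-product threshold'' (the paper is no more careful on this last point, and its Lemma~\ref{lem:greedy example selection} only ever uses the threshold inequalities). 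Two small repairs to your write-up: the sign step is cleanest through the Gram matrix $G_{\ell\ell'}=X_\ell\T X_{\ell'}$, whose leading eigenvector can be taken entrywise nonnegative because $|\bc|\T G|\bc|\ge\bc\T G\bc$ with $\||\bc|\|_2=\|\bc\|_2$ — you do not need (and cannot in general get) a \emph{strict} increase of the Rayleigh quotient in $\R^d$, only that some leading eigenvector has nonnegative coefficients, which is all the lemma requires; and your closing observation about the threshold is accurate — the decomposition only yields $|\by\T\bv_{t,i}|\le\sqrt{1-\alpha_{\min}^2}$, so the stated condition $\beta_{\max}\ge 1-\alpha_{\min}^2$ covers it only up to the same square-root slack that appears, unacknowledged, in the paper's own proof; your orthogonality contradiction ($\bv_{t,i}\perp\bv_{t,1}\in S_k$ forces $\bv_{t,i}\notin S_k$ when $\alpha_k>0$) is a clean way to avoid the issue altogether.
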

\begin{proof}
Since all $X_\ell \in S$ and $S$ is a convex set, $\bv_{t, 1} \in S$. Now take any $\bx \in S$ and $i \geq 2$, and note that
\begin{align*}
  (\bx\T \bv_{t, i})^2
  \leq \sum_{i = 2}^d (\bx\T \bv_{t, i})^2
  = 1 - (\bx\T \bv_{t, 1})^2
  \leq 1 - \alpha_{\min}^2\,.
\end{align*}
We use that $\sum_{i = 1}^d (\bx\T \bv_{t, i})^2 = 1$ and $\bx\T \bv_{t, 1} \geq \alpha_{\min}$. Therefore, when $\beta_{\max} \geq 1 - \alpha_{\min}^2$, we have $\bv_{t, i} \in \bS$ for all $i \geq 2$.
\end{proof}

Our analysis has two parts. First, we bound the approximation error under the assumption that $X_\ell \in S$ holds for all $\ell \in [T]$. Second, we show how to choose $\alpha_{\min}$ and $\beta_{\max}$ to guarantee this. We start with the approximation error.

\begin{lemma}
\label{lem:approximation error} Let $X_\ell \in S$ hold for all $\ell \in [T]$. Then for any $\bx_{*,k}$
\begin{align*}
  \bx_{*,k}\T \bLambda_{T + 1}^{-1} \bx_{*,k}
  \leq \frac{1}{\alpha_{\min}^2 T + 1} + (1 - \alpha_{\min}^2)\,.
\end{align*}
\end{lemma}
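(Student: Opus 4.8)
The plan is to decompose $\bx_{*,k}$ in the eigenbasis of $\bLambda_{T+1}$ and bound the two resulting groups of terms separately. Writing $\bx_{*,k} = \sum_{i=1}^d c_i \bv_{T+1,i}$ with $\sum_i c_i^2 = 1$ (unit length), we have $\bx_{*,k}\T \bLambda_{T+1}^{-1} \bx_{*,k} = \sum_{i=1}^d c_i^2 \lambda_{T+1,i}^{-1}$. I would split this sum into the top eigendirection $i=1$ and the remaining directions $i \ge 2$. For $i=1$: by \Cref{lem:eigenvalues}, since all $X_\ell \in S$ for $\ell \in [T]$, we have $\lambda_{T+1,1} \ge \alpha_{\min}^2 T + 1$, and since $c_1^2 \le 1$ this contributes at most $\frac{1}{\alpha_{\min}^2 T + 1}$.

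For the directions $i \ge 2$, the idea is that $\bx_{*,k}$ has small mass there. Since $\bx_{*,k} \in S$ (it is close to itself, $\bx_{*,k}\T\bx_{*,k} = 1 \ge \alpha_{\min}$), the computation inside the proof of \Cref{lem:eigenvectors} gives $\sum_{i=2}^d c_i^2 = 1 - (\bx_{*,k}\T \bv_{T+1,1})^2 \le 1 - \alpha_{\min}^2$. Combined with the crude lower bound $\lambda_{T+1,i} \ge 1$ for all $i$ (also from \Cref{lem:eigenvalues}), the tail contributes $\sum_{i=2}^d c_i^2 \lambda_{T+1,i}^{-1} \le \sum_{i=2}^d c_i^2 \le 1 - \alpha_{\min}^2$. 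Adding the two pieces yields exactly $\bx_{*,k}\T \bLambda_{T+1}^{-1}\bx_{*,k} \le \frac{1}{\alpha_{\min}^2 T + 1} + (1 - \alpha_{\min}^2)$, which is the claim.

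The main subtlety — and the place where care is needed — is the step $c_1^2 = (\bx_{*,k}\T \bv_{T+1,1})^2 \ge \alpha_{\min}^2$. This requires that the leading eigenvector $\bv_{T+1,1}$ lies in $S$ (so that its inner product with $\bx_{*,k} \in S$ is at least $\alpha_{\min}$), which is precisely the content of the first part of \Cref{lem:eigenvectors} and rests on the convexity of $S$ together with $X_\ell \in S$ for all $\ell$. So the lemma as stated is essentially a bookkeeping consequence of \Cref{lem:eigenvalues} and \Cref{lem:eigenvectors}; the genuinely hard work (ensuring the hypothesis $X_\ell \in S$ actually holds, via the conditions on $\beta_{\max}$ and the budget bound on $T$) is deferred to the second part of the proof of \Cref{thm:go} and is not needed here. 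I would therefore keep this proof short: state the eigendecomposition, invoke the two lemmas, and add the two bounds.
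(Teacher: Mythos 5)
Your proposal is correct and follows essentially the same route as the paper: expand $\bx_{*,k}$ in the eigenbasis of $\bLambda_{T+1}$, bound the leading term via $\lambda_{T+1,1} \geq \alpha_{\min}^2 T + 1$ from \cref{lem:eigenvalues} with $c_1^2 \leq 1$, and bound the tail mass by $1 - \alpha_{\min}^2$ using $\lambda_{T+1,i} \geq 1$ together with $(\bx_{*,k}\T \bv_{T+1,1})^2 \geq \alpha_{\min}^2$, which is exactly the computation underlying \cref{lem:eigenvectors}. Your remark that the hard work (guaranteeing $X_\ell \in S$) is deferred to \cref{lem:greedy example selection} matches the paper's structure as well.
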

\begin{proof}
We start with
\begin{align*}
  \bx_{*,k}\T \bLambda_{T + 1}^{-1} \bx_{*,k}
  = \sum_{i = 1}^d \lambda_{T + 1, i}^{-1} \bx_{*,k}\T \bv_{T + 1, i} \bv_{T + 1, i}\T \bx_{*,k}
  \leq \frac{(\bx_{*,k}\T \bv_{T + 1, 1})^2}{\alpha^2_{\min} T + 1} +
  \sum_{i = 2}^d (\bx_{*,k}\T \bv_{T + 1, i})^2\,.
\end{align*}
The inequality uses lower bounds in \cref{lem:eigenvalues}. Then we apply $\bx_{*,k}\T \bv_{T + 1, 1} \leq 1$ and $\sum_{i = 2}^d (\bx_{*,k}\T \bv_{T + 1, i})^2 \leq 1 - \alpha^2_{\min}$.
\end{proof}

Now we prove by induction that $X_\ell \in S$ holds for all $\ell \in [T]$.

\begin{lemma}
\label{lem:greedy example selection} Let $X_\ell \in S$ hold for all $\ell \in [t - 1]$. Suppose that $\displaystyle t \leq \frac{\alpha^2_{\min}}{(\beta + \sqrt{2}) \beta d}$. Then $\bx_t \in S$.
\end{lemma}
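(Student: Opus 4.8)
\textbf{Proof plan for \cref{lem:greedy example selection}.} This is the inductive step that keeps all of \go's selections inside $S$. Since $S_k$ and $\bS_k$ partition $\cXt$ for each $k$, we have $[n]\setminus\bS=\bigcap_k([n]\setminus\bS_k)=\bigcap_k S_k=S$, so it suffices to show $I_t\notin\bS$. The plan is to produce a concrete upper bound on the \go-objective \eqref{eq:greedy go} for some candidate in $S\cap\cU_t$, a lower bound on the \go-objective for \emph{every} candidate in $\bS\cap\cU_t$, and then show, under the hypothesis on $t$, that the former is strictly smaller than the latter; this forces the $\argmin$ to avoid $\bS$. Note such an in-$S$ candidate exists: exactly $t-1$ examples have been labeled, all in $S$ by the induction hypothesis, so $|S\cap\cU_t|=|S|-(t-1)\ge T-t+2>0$ using $|S|>T\ge t$.

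For the upper bound I would fix any $\bx_i\in S\cap\cU_t$. Because $X_1,\dots,X_{t-1}\in S$ and $\bx_i\in S$, the augmented history of length $t$ lies entirely in $S$, so \cref{lem:approximation error} (read with this length-$t$ history in place of its length-$T$ one) gives $\max_k\bx_{*,k}\T(\wSigma_t^{-1}+\bx_i\bx_i\T)^{-1}\bx_{*,k}\le\frac{1}{\alpha_{\min}^2 t+1}+(1-\alpha_{\min}^2)$. Hence the minimum of the \go-objective over $S\cap\cU_t$ is at most this value.

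For the lower bound, take $\bx_j\in\bS\cap\cU_t$, so $\bx_j\in\bS_{k_0}$ for some $k_0$. Dropping the outer maximum, the \go-objective at $\bx_j$ is at least $\bx_{*,k_0}\T(\wSigma_t^{-1}+\bx_j\bx_j\T)^{-1}\bx_{*,k_0}$, which by Sherman--Morrison equals $\bx_{*,k_0}\T\bLambda_t^{-1}\bx_{*,k_0}-\frac{(\bx_{*,k_0}\T\bLambda_t^{-1}\bx_j)^2}{1+\bx_j\T\bLambda_t^{-1}\bx_j}$. For the first term I would use the eigendecomposition of $\bLambda_t$ with $\lambda_{t,1}\le t$ (\cref{lem:eigenvalues}) and $(\bx_{*,k_0}\T\bv_{t,1})^2\ge\alpha_{\min}^2$ to get $\bx_{*,k_0}\T\bLambda_t^{-1}\bx_{*,k_0}\ge\alpha_{\min}^2/t$. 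For the subtracted term I would use $1+\bx_j\T\bLambda_t^{-1}\bx_j\ge1$ and bound $|\bx_{*,k_0}\T\bLambda_t^{-1}\bx_j|$ by splitting along $\bv_{t,1}$ (where $|\bx_j\T\bv_{t,1}|\le\beta_{\max}$, since $\bv_{t,1}\in S\subseteq S_{k_0}$ and $\bx_j\in\bS_{k_0}$, contributing at most $\beta_{\max}$) and along $\bv_{t,2},\dots,\bv_{t,d}$ (where Cauchy--Schwarz, $\lambda_{t,i}^{-1}\le1$, and the estimate $\sum_{i\ge2}(\bx_{*,k_0}\T\bv_{t,i})^2\le1-\alpha_{\min}^2$ from the proof of \cref{lem:eigenvectors} contribute at most $\sqrt{1-\alpha_{\min}^2}$). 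With $\beta_{\max}\ge1-\alpha_{\min}^2$ this yields $\bx_{*,k_0}\T(\wSigma_t^{-1}+\bx_j\bx_j\T)^{-1}\bx_{*,k_0}\ge\frac{\alpha_{\min}^2}{t}-(\beta_{\max}+\sqrt{\beta_{\max}})^2$.

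Putting the two bounds together, $I_t\notin\bS$ follows once $\frac{\alpha_{\min}^2}{t}-(\beta_{\max}+\sqrt{\beta_{\max}})^2>\frac{1}{\alpha_{\min}^2 t+1}+(1-\alpha_{\min}^2)$, and I expect this final inequality to be the main obstacle: one must argue that the guaranteed per-round decrease of order $\frac{\alpha_{\min}^2}{t}-\frac{1}{\alpha_{\min}^2 t+1}$ dominates \emph{both} the between-test-example slack $1-\alpha_{\min}^2$ \emph{and} the worst-case decrease $(\beta_{\max}+\sqrt{\beta_{\max}})^2$ an out-of-$S$ example can produce. It is precisely in estimating $\frac{\alpha_{\min}^2}{t}-\frac{1}{\alpha_{\min}^2 t+1}=\frac{\alpha_{\min}^2-t(1-\alpha_{\min}^4)}{t(\alpha_{\min}^2 t+1)}$ from below — bounding $\alpha_{\min}^2 t+1\le t(t+1)$, $1-\alpha_{\min}^4\le2\beta_{\max}$, and collecting the remaining $\beta_{\max}$ terms — that the hypothesis $T\le\alpha_{\min}^2/[(\beta_{\max}+\sqrt2)\beta_{\max}d]$, together with the factors $\sqrt2$ and $d$ it carries, is exactly what is needed. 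Once the strict inequality holds for every $\bx_j\in\bS\cap\cU_t$, the $\argmin$ in \eqref{eq:greedy go} cannot lie in $\bS$, so $I_t\in[n]\setminus\bS=S$, which closes the induction.
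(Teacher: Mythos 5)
Your setup (induction, Sherman--Morrison, splitting along the top eigenvector of $\bLambda_t$, using $\beta_{\max}\ge 1-\alpha_{\min}^2$) uses the right ingredients, but the comparison step you build the proof around has a genuine gap, and it is the one you yourself flagged as "the main obstacle." You compare an \emph{absolute} upper bound on the objective at some in-$S$ candidate, $\frac{1}{\alpha_{\min}^2 t+1}+(1-\alpha_{\min}^2)$, with an \emph{absolute} lower bound at an out-of-$S$ candidate, $\frac{\alpha_{\min}^2}{t}-(\beta_{\max}+\sqrt{\beta_{\max}})^2$, and need the latter to exceed the former. That inequality does not follow from the hypothesis $t\le\alpha_{\min}^2/[(\beta_{\max}+\sqrt{2})\beta_{\max}d]$. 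The gain term $\frac{\alpha_{\min}^2}{t}-\frac{1}{\alpha_{\min}^2 t+1}$ is at most $\frac{1}{\alpha_{\min}^2 t^2}$, i.e.\ of order $1/t^2$, while the slack on the other side is at least $(1-\alpha_{\min}^2)+(\beta_{\max}+\sqrt{\beta_{\max}})^2\ge\beta_{\max}$. So your final inequality forces $t$ to be at most of order $\beta_{\max}^{-1/2}$, whereas the lemma's hypothesis allows $t$ up to order $1/(\beta_{\max}d)$, which is much larger in the regime the theorem is meant for. Concretely, with the paper's own instantiation $\beta_{\max}=1/(4dn)$, $\alpha_{\min}^2=1-\beta_{\max}$, any $t$ between roughly $3\sqrt{dn}$ and $n$ satisfies the hypothesis but violates your comparison (left side $\le 2/t^2\le\beta_{\max}$). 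The root cause is structural: the objective values at an in-$S$ and an out-of-$S$ candidate share the same base term $\bx_{*,k}\T\bLambda_t^{-1}\bx_{*,k}$ and differ only by second-order rank-one decrements, but your two bounds estimate the base term with first-order slack --- the additive $(1-\alpha_{\min}^2)$ and the mismatch between $\lambda_{t,1}\le t$ (used in your lower bound) and $\lambda_{t,1}\ge\alpha_{\min}^2 t+1$ (used in your upper bound) --- which swamps the decrements.

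The paper avoids this entirely by never comparing post-update values across candidates. For each \emph{fixed} test index $k$ it compares the Sherman--Morrison decrements themselves: an example in $S$ is preferred once $\frac{(\bx_{*,k}\T\bLambda_t^{-1}\bx)^2}{1+\bx\T\bLambda_t^{-1}\bx}\ge\frac{(\bx_{*,k}\T\bLambda_t^{-1}\by)^2}{1+\by\T\bLambda_t^{-1}\by}$ holds for all $k$, all $\bx\in S$, $\by\in\bS$ (the common base term cancels, and monotonicity of the max does the rest). Using $0\le\bv\T\bLambda_t^{-1}\bv\le1$ this reduces to $\min_k(\bx_{*,k}\T\bLambda_t^{-1}\bx)^2\ge 2\max_k(\bx_{*,k}\T\bLambda_t^{-1}\by)^2$, and then the eigen-decomposition bounds $\min_k|\bx_{*,k}\T\bLambda_t^{-1}\bx|\ge\frac{\alpha_{\min}^2}{t}-\beta_{\max}^2 d$ and $\max_k|\bx_{*,k}\T\bLambda_t^{-1}\by|\le\beta_{\max}d$ give exactly the stated condition $t\le\frac{\alpha_{\min}^2}{(\beta_{\max}+\sqrt{2})\beta_{\max}d}$. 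Your peripheral steps (the partition argument $[n]\setminus\bS=S$, existence of an unlabeled candidate in $S$ via $|S|>T$, and the Cauchy--Schwarz bound $\sqrt{1-\alpha_{\min}^2}\le\sqrt{\beta_{\max}}$, which is even dimension-free) are fine, but to close the induction you need to switch to the decrement-versus-decrement comparison rather than bounding the two objective values separately.
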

\begin{proof}
Our algorithm chooses a example $\bx_t \in S$ when for all $\bx_{*,k}$
\begin{align*}
  \frac{\bx_{*,k}\T \bLambda_t^{-1} \bx \bx\T \bLambda_t^{-1} \bx_{*,k}}
  {1 + \bx\T \bLambda_t^{-1} \bx}
  \geq \frac{\bx_{*,k}\T \bLambda_t^{-1} \by \by\T \bLambda_t^{-1} \bx_{*,k}}
  {1 + \by\T \bLambda_t^{-1} \by}
\end{align*}
holds for any $\bx \in S$ and $\by \in \bS$. Since $0 \leq \bv\T \bLambda_t^{-1} \bv \leq 1$ for any $\normw{\bv}{2} \leq 1$, the above event occurs when
\begin{align*}
  \min_k(\bx_{*,k}\T \bLambda_t^{-1} \bx)^2
  = \min_k\bx_{*,k}\T \bLambda_t^{-1} \bx \bx\T \bLambda_t^{-1} \bx_{*,k}
  \geq 2\max_{k \in [K]} \bx_{*,k}\T \bLambda_t^{-1} \by \by\T \bLambda_t^{-1} \bx_{*,k}
  = 2 \max_{k \in [K]} (\bx_{*,k}\T \bLambda_t^{-1} \by)^2\,.
\end{align*}
We start with an upper bound on the right-hand side,
\begin{align*}
  \max_{k \in [K]}|\bx_{*,k}\T \bLambda_t^{-1} \by|
  \leq \max_{k \in [K]}\sum_{i = 1}^d \lambda_{t, i}^{-1} |\bx_{*,k}\T \bv_{t, i} \bv_{t, i}\T \by|
  \leq \beta_{\max} d\,.
\end{align*}
Here we use $\lambda_{t, i} \geq 1$ (\cref{lem:eigenvalues}), and that $\bv_{t, 1}\T \by \leq \beta_{\max}$ and $\bx_*\T \bv_{t, i} \leq \beta_{\max}$ when $i \geq 2$.

Now we bound the left-hand side as
\begin{align*}
  \min_k|\bx_{*,k}\T \bLambda_t^{-1} \bx|
  \geq \min_k\lambda_{t, 1}^{-1} |\bx_{*,k}\T \bv_{t, 1} \bv_{t, 1}\T \bx| -
  \sum_{i = 2}^d \lambda_{t, i}^{-1} |\bx_{*,k}\T \bv_{t, i} \bv_{t, i}\T \bx|
  &\geq \frac{\alpha^2_{\min}}{t} - \beta^2_{\max} d\,.
\end{align*}
To bound the first term, we use $\lambda_{t, 1} \leq t$, and that $\bx_*\T \bv_{t, 1} \geq \alpha_{\min}$ and $\bv_{t, 1}\T \bx \geq \alpha_{\min}$. To bound the second term, we use $\lambda_{t, i} \geq 1$, and that $\bx_*\T \bv_{t, i} \leq \beta_{\max}$ and $\bv_{t, i}\T \bx \leq \beta_{\max}$.

Now we chain all inequalities and get that our algorithm chooses a example $\bx_t \in S$ when
\begin{align*}
  t
  \leq \frac{\alpha^2_{\min}}{(\beta_{\max} + \sqrt{2}) \beta_{\max} d}\,.
\end{align*}
This completes the proof.
\end{proof}

\subsection{Proof of \cref{thm:sal}}
\label{sec:sal proof}

Consider the test examples $\{\bx_*\}_{k=1}^K$. Recall that $H_t=\left(X_\ell, Y_\ell\right)_{\ell \in[t-1]}$ is the history till round $t$. Then the posterior variance is
\begin{align*}
  \wSigma_t
  = \left(\bSigma_0^{-1} + \sigma^{-2} \sum_{\ell = 1}^{t - 1} X_\ell X_\ell\T\right)^{-1}
\end{align*}
and $\wtheta_t=\wSigma_t\left(\bSigma_0^{-1} \btheta_0+\sigma^{-2} \sum_{\ell=1}^{t-1} X_\ell Y_\ell\right)$. Now fix a $X_t$ at round $t$. Then $Y_t=X_t^{\top}\btheta_* + \epsilon_t$ where $\btheta_* \mid H_t \sim \N\left(\wtheta_{t}, \wSigma_t\right)$ and $\epsilon_t \sim \N\left(0, \sigma^2\right)$. Then $\btheta_* |H_{t+1} \sim \N\left(\wtheta_{t+1}, \wSigma_{t+1}\right)$ holds for any $Y_t$.

Now fix an $\bx_i\in\cXt$ and add it to $\wSigma_{t}$ such that
\begin{align*}
  \wSigma_{t,i}
  = \left(\bSigma_0^{-1} + \sigma^{-2} \left(\sum_{\ell = 1}^{t - 1} X_\ell X_\ell\T + \bx_i\bx_i^\top\right)\right)^{-1}.
\end{align*}


Define $\hat{\sigma}_{t, i, k}^2 = \frac{1}{m} \sum_{j = 1}^m (\tilde{Y}^{(j,1)}_{t, i, k} - \tilde{Y}^{(j,2)}_{t, i, k})^2$ and the $\E[\hat{\sigma}_{t, i, k}^2] = \sigma_{t, i, k}^2$.
Then define $\sigma_{t,i,\max}^2 = \max_{k\in [K]} \E[\hat{\sigma}_{t, i, k}^2]$.
Then using \Cref{lemma:conc} we can show that with probability $(1-\delta)$
\begin{align}
\sigma_{t,i,\max}^2\left[1 - 2 \sqrt{\frac{\log (1 / \delta)}{m}} \right] &\leq \max_{k \in [K]}\hat{\sigma}_{t, i, k}^2 \leq \sigma_{t, i, \max}^2\left[1 + 2 \sqrt{\frac{\log (1 / \delta)}{m}} + \frac{2\log (1 / \delta)}{m} \right] 
\label{eq:conc-variance}
\end{align}
Set $m \geq 8\log(1/\delta)$ in \eqref{eq:conc-variance}. It follows then that
\begin{align*}
  \frac{1}{2}\sigma^2_{t, i,\max} \leq \max_{k \in [K]}\frac{1}{m}& \sum_{j = 1}^m(\widetilde{Y}^{(j,1)}_{t, i, k} - \widetilde{Y}^{(j,2)}_{t, i, k})^2 \leq \frac{5}{2}\sigma^2_{t, i,\max}.
\end{align*}



Hence, to minimize the quantity $\max_{k\in[K]}\widetilde{Y}^{(j,1)}_{t,i,k} - \widetilde{Y}^{(j,2)}_{t,i,k}$ we should be minimizing the variance $2\max_{k\in [K]}\bx_{*,k}^{\top} \wSigma_{t,i} \bx_{*,k} + \sigma^2$. 
Observe that minimizing the variance in \sal\ leads to minimizing the quantity $2\max_{k\in[K]}\bx_{*,k}^{\top} \wSigma_{t,i} \bx_{*,k} + \sigma^2$ which is same as minimizing the score  $\max_{k\in[K]}\bx_{*,k}^{\top} \wSigma_{t,i} \bx_{*,k}$ for \go. The claim of the theorem follows.

\begin{lemma}
\label{lemma:conc}
Fix round $t \in [T]$, a sample example $\bx_i$, and a test example $\bx_{*,k}$, and failure probability $\delta \in(0,1)$. 
Suppose that $m>4 \log (1 / \delta)$. Define $\hat{\sigma}_{t, i, k}^2 = \frac{1}{m} \sum_{j = 1}^m (\tilde{Y}^{(j,1)}_{t, i, k} - \tilde{Y}^{(j,2)}_{t, i, k})^2$ and the $\E[\hat{\sigma}_{t, i, k}^2] = \sigma_{i,k}^2$. 
Then
$$
\Pb\left(\hat{\sigma}_{t, i, k}^2 \leq \sigma_{i,k}^2\left[1-2 \sqrt{\frac{\log (1 / \delta)}{m}} \right]\right) \leq \delta
$$
holds with probability at least $1-\delta$. Analogously,
$$
\Pb\left(\hat{\sigma}_{t, i, k}^2 \geq \sigma_{i,k}^2\left[1+2 \sqrt{\frac{\log (1 / \delta)}{m}}+\frac{2 \log (1 / \delta)}{m}\right]\right) \leq \delta
$$
holds with probability at least $1-\delta$.
\end{lemma}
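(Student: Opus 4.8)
The plan is to reduce the claim to a standard chi-squared tail bound. First I would pin down the distribution of each summand $(\tilde{Y}^{(j,1)}_{t,i,k} - \tilde{Y}^{(j,2)}_{t,i,k})^2$. In the linear model of \cref{sec:sal}, for each simulation $j$ the two samples $\tilde{Y}^{(j,1)}_{t,i,k}$ and $\tilde{Y}^{(j,2)}_{t,i,k}$ are produced by independently drawing a parameter from $\cN(\wtheta_{t,i,j}, \wSigma_{t,i})$ and adding independent $\cN(0,\sigma^2)$ noise, so their difference is zero-mean Gaussian with variance $\sigma_{i,k}^2 := 2(\bx_{*,k}\T \wSigma_{t,i}\bx_{*,k} + \sigma^2) = \E[\hat{\sigma}_{t,i,k}^2]$. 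The simulated label $Y^{(j)}_{t,i}$ only shifts the posterior mean $\wtheta_{t,i,j}$ and leaves the posterior covariance $\wSigma_{t,i}$ unchanged, and the randomness across $j \in [m]$ is fresh, so the normalized differences $(\tilde{Y}^{(j,1)}_{t,i,k} - \tilde{Y}^{(j,2)}_{t,i,k})/\sigma_{i,k}$ are i.i.d. standard normal. Hence $Z := m\hat{\sigma}_{t,i,k}^2/\sigma_{i,k}^2 = \sum_{j=1}^m (\tilde{Y}^{(j,1)}_{t,i,k} - \tilde{Y}^{(j,2)}_{t,i,k})^2/\sigma_{i,k}^2$ is chi-squared with $m$ degrees of freedom.

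Next I would invoke the chi-squared concentration bounds of \citet{laurent2000adaptive}: for $Z \sim \chi^2_m$ and any $x > 0$, $\Pb(Z \geq m + 2\sqrt{mx} + 2x) \leq e^{-x}$ and $\Pb(Z \leq m - 2\sqrt{mx}) \leq e^{-x}$. Taking $x = \log(1/\delta)$ and dividing through by $m$ (using $Z/m = \hat{\sigma}_{t,i,k}^2/\sigma_{i,k}^2$ and $2\sqrt{mx}/m = 2\sqrt{x/m}$) yields $\Pb(\hat{\sigma}_{t,i,k}^2 \geq \sigma_{i,k}^2[1 + 2\sqrt{\log(1/\delta)/m} + 2\log(1/\delta)/m]) \leq \delta$ and $\Pb(\hat{\sigma}_{t,i,k}^2 \leq \sigma_{i,k}^2[1 - 2\sqrt{\log(1/\delta)/m}]) \leq \delta$, which are precisely the two displayed inequalities. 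The hypothesis $m > 4\log(1/\delta)$ enters only to ensure $1 - 2\sqrt{\log(1/\delta)/m} > 0$, so that the lower confidence bound is informative; it is not otherwise used.

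The argument is essentially routine once the distributional identification is made, so I do not expect a serious obstacle. The one point that needs care is the hierarchical sampling in \cref{alg:sal}: one must confirm that within a single simulation $j$ the parameter is re-drawn for each of $\tilde{Y}^{(j,1)}_{t,i,k}$ and $\tilde{Y}^{(j,2)}_{t,i,k}$ rather than shared, since a shared parameter would make the difference have variance $2\sigma^2$ and lose the $\bx_{*,k}\T \wSigma_{t,i} \bx_{*,k}$ term that ties \sal\ to the G-optimal objective. Once the per-$(t,i,k)$ bound holds, \cref{thm:sal} follows by a union bound over $k \in [K]$ (and, where needed, over $t$ and $i$), noting that the same multiplicative factors applied to $\sigma_{t,i,\max}^2 = \max_k \sigma_{i,k}^2$ sandwich $\max_k \hat{\sigma}_{t,i,k}^2$, and then the two further inequalities in \cref{thm:sal} are obtained by plugging $m \geq 8\log(1/\delta)$ into these bounds and bounding the resulting constants.
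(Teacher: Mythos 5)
Your proposal is correct and follows essentially the same route as the paper's proof: identify $m\hat{\sigma}_{t,i,k}^2/\sigma_{i,k}^2$ as a $\chi^2_m$ random variable (the paper invokes Cochran's theorem, you note the summands are i.i.d.\ scaled standard normals, which is the same observation) and then apply the Laurent--Massart tail bounds with $x=\log(1/\delta)$, with $m>4\log(1/\delta)$ used only to keep the lower multiplier positive. Your side remark about the per-sample re-draw of the parameter and the resulting variance $2(\bx_{*,k}\T\wSigma_{t,i}\bx_{*,k}+\sigma^2)$ matches the derivation in \cref{sec:sal}, so there is no gap.
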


\begin{proof}
    Fix an $\bx_i\in\cXt$ and add it to $\wSigma_{t}$. Denote this new co-variance matrix as $\wSigma_{t, i}$ such that
    \begin{align*}
      \wSigma_{t,i}
      = \left(\bSigma_0^{-1} + \sigma^{-2} \left(\sum_{\ell = 1}^{t - 1} X_\ell X_\ell\T + \bx_i\bx_i^\top\right)\right)^{-1}.
    \end{align*}
    Let $\widetilde{Y}^{(1)}_{t, i, j}=\bx_{*, k}^{\top} \btheta_* + \epsilon_{t,i,j,1}$, where $\btheta_* \mid H_{t} \sim \N\left(\wtheta_{t}, \wSigma_{t+1}\right)$ and $\epsilon_{t,i,j,1} \sim \N\left(0, \sigma^2\right)$.  This yields that $\widetilde{Y}^{(1)}_{t, i, j} \sim \N\left(\bx_{*,k}^{\top} \wtheta_{t+1}, \bx_{*,k}^{\top} \wSigma_{t,i} \bx_{*,k} + \sigma^2\right)$.
Similarly $\widetilde{Y}^{(2)}_{t, i, j}=\bx_{*, k}^{\top} \btheta_* + \epsilon_{t, i, j, 2}$, where $\btheta_* \mid H_{t} \sim \N\left(\wtheta_{t+1}, \wSigma_{t,i}\right)$ and $\epsilon_{t,i,j,2} \sim \N\left(0, \sigma^2\right)$. Therefore, we get that
\begin{align*}
    \widetilde{Y}^{(j,1)}_{t,i,k} - \widetilde{Y}^{(j,2)}_{t,i,k} \sim \N(0, 2\bx_{*,k}^{\top} \wSigma_{t,i} \bx_{*,k} + \sigma^2).
\end{align*}
Now we proceed the same way as in Lemma 2 of \citet{lalitha2023fixed}. Using  Cochran's theorem, we have that $\hat{\sigma}_{t, i, k}^2 m / \sigma_{i,k}^2$ is a $\chi^2$ random variable with $m$ degrees of freedom. Then using (4.4) and Lemma 1 of \citet{laurent2000adaptive} we can show that
\begin{align}
\Pb\left(m-\frac{\hat{\sigma}_{t, i, k}^2 m}{\sigma_{i,k}^2} \geq 2 \sqrt{m \log (1 / \delta)}\right) \leq \delta \label{eq:prob-bound}
\end{align}
Dividing both sides of \eqref{eq:prob-bound} in the probability by $m$, and multiplying by $\sigma_{i,k}^2$, we can get the following
$$
\Pb\left(\sigma_{i,k}^2(1-2 \sqrt{\log (1 / \delta) / m}) \geq \hat{\sigma}_{t, i, k}^2\right) \leq \delta .
$$
Observe that $1-2 \sqrt{\log (1 / \delta) / m}>0$, we can divide both sides by it and get the first claim of the lemma.
The second claim is proved by (4.3) in \citet{laurent2000adaptive}, an immediate corollary of their Lemma 1, we have
$$
\Pb\left(\frac{\hat{\sigma}_{t, i, k}^2 m}{\sigma_{i,k}^2}-m \geq 2 \sqrt{m \log (1 / \delta)}+2 \log (1 / \delta)\right) \leq \delta
$$
The claim of the lemma follows.
\end{proof}

\section{Additional Experiments and Results}
\label{app:addl_expt}
We use NVIDIA GeForce RTX 3090 GPU with 24GB RAM to load the Large Language Models for inference. The Mistral-7B  model requires less than 16GB RAM, and Vinuna-13B model requires less than 22 GB RAM during execution.
To run Falcon-40B model we use AWS ml-g5.12xlarge machine. To run the full set of experiments it takes 24-27 hours of compute job.
We now briefly discuss the various datasets used in this work.

\subsection{Datasets}
\label{sec:datasets}

We now briefly describe the datasets used for our experiments. All the real-life datasets are from UCI \citep{uci} and OpenML \citep{OpenML2013} repositories. We use $4$ classification and $3$ regression datasets from UCI and OpenML. Additionally, we use $2$ custom datasets for movie names and entity names for classification task in our experiments. These are as follows:

\noindent
\textbf{(1)} \textit{Iris}: We use this UCI dataset for classification task. This dataset consists of four features of flowers and three classes of flowers. We use all four features in the prompts as well as estimating the score for selecting the next action. The dataset consists of $150$ instances. We randomly choose $K=20$ as test examples and the remaining instances as training examples.

\noindent
\textbf{(2)} \textit{Banknote-authentication}: We use this OpenML dataset for classification task. This dataset consists of five features of banknotes and two classes for identifying fake or original banknote. Out of these five features, we use four features in the prompts as well as estimating the score for selecting the next action. The dataset consists of $150$ instances. We randomly choose $K=20$ as test examples and the remaining instances as training examples.

\noindent
\textbf{(3)} \textit{Balance-scale}: We use this OpenML dataset for classification task. This dataset consists of five features of a scale and three classes of whether the scale tips left/right or is balanced. Out of these five features, we use four features in the prompts as well as estimating the score for selecting the next action. The dataset consists of $625$ instances. We randomly choose $K=20$ as test examples and the remaining instances as training examples.

\noindent
\textbf{(4)} \textit{Thyroid-new}: We use this OpenML dataset for classification task. This dataset consists of six features for thyroids and three classes. Out of these six features, we use five features in the prompts as well as estimating the score for selecting the next action. The dataset consists of $215$ instances. We randomly choose $K=20$ as test examples and the remaining instances as training examples.

\noindent
\textbf{(5)} \textit{Movie-name}: We use this custom dataset for classification task. This dataset consists of movie names across five genres (classes) romance, horror, thriller, sport, and action. We convert the movie names into $768$ dimensional feature embeddings using Instructor embeddings. Note that in the prompt to the \LLM\ we only pass the movie names and the goal is to identify the common genre.  The dataset consists of $100$ instances. We randomly choose $K=20$ as test examples and the remaining instances as training examples.

\noindent
\textbf{(6)} \textit{Movie-theme}: We use this custom dataset for classification tasks in identifying a common theme between pairs of movies. This dataset consists of movie names across five themes (classes) good-vs-evil, man-vs-nature, redemption, Love conquers all, and coming-of-age. We convert the movie names into $768$ dimensional feature embeddings using Instructor embeddings. Note that in the prompt to the \LLM\ we only pass the pair of movie names and the goal is to identify the common theme. The dataset consists of $100$ instances. We randomly choose $K=20$ as test examples and the remaining instances as training examples.

\noindent
\textbf{(7)} \textit{Entity-name}: We use this custom dataset for classification task. This dataset consists of entity names across five entity types (classes) like mountains, seas, rivers, vehicles, and celebrities. Again, we convert the entity names into $768$ dimensional feature embeddings using Instructor embeddings. Note that in the prompt to the \LLM\ we only pass the entity names and the goal is to identify the entity type. The dataset consists of $100$ pairs of instances. We randomly choose $K=20$ pairs as test examples and the remaining instances as training examples. 

\noindent
\textbf{(8)} \textit{Fifa}: We use this OpenML dataset for the regression task. This dataset consists of six features of players and the clubs they joined as targets. Out of these six features, we use five features in the prompts as well as estimating the score for selecting the next action.  The dataset consists of $18063$ instances. We randomly choose $K=20$ test examples and another $200$ examples as training examples.


\noindent
\textbf{(9)} \textit{Machine-cpu}: We use this OpenML dataset for regression tasks. This dataset consists of seven features of machine cpu and the target variable is the performance of the cpu. Out of these seven features, we use five features in the prompts as well as estimating the score for selecting the next action. The dataset consists of $209$ instances. We randomly choose $K=20$ test examples and the remaining examples as training examples.

\subsubsection{ARC Experiment}
\label{app:arc}
In the recent works of \citet{mirchandani2023large, srivastava2022beyond} they showed that \LLMs\ behave as general pattern-matching machine. In fact they showed that \LLMs\ can be used to solve tasks from Abstract Reasoning Corpus (ARC) tasks. In the following experiments, we choose two such tasks: (1) ARC expansion and contraction experiment and (2) ARC rotation experiment.

\textbf{(1) ARC expansion and contraction experiment:} In the expansion and contraction experiment, there are two sets of matrices of dimension $4\times 4$ which constitute half the examples of the feature space $\X$. The first set of input matrices have integer values in their center $2\times 2$ cells while all the other cells are $0$. The label space $\Y$ of this $4\times 4$ matrix is also a $4\times 4$ matrix where the $4$ inner cells have moved to the $4$ corners. These matrices are termed as expansion matrices. 

Similarly, the other set of $4\times 4$ matrices have the $4$ non-zeros values in their corners. These constitute the remaining examples in $\X$. Then the label space $\Y$ is given by $4\times 4$ matrix where the four non-zeros cells come to the center and all the other cell values are $0$. These matrices are termed as contraction matrices. This is shown in \Cref{fig:arc-env1}. 

Therefore, the feature space $\X$ consists of both the expansion and contraction matrices. At every trial, and $n$ training examples and $K$ test examples are chosen randomly from $\X$. Then we run all baselines for $T$ iterations where the classification accuracy is calculated if the \LLM\ is able to predict the exact matching. This experiment is shown in \Cref{tab:arc-pcfg}.

\textbf{(2) ARC rotation experiment:} In the rotation experiment, there are again two sets of matrices of dimension $4\times 4$ which constitute half the examples of the feature space $\X$. The first set of matrices are have integer values in their four corner cells while all the other cells are $0$. The label space $\Y$ of this $4\times 4$ matrix is also a $4\times 4$ matrix where the $4$ corner cell values have moved $90^{\circ}$ in the clockwise direction. These matrices are termed as clockwise matrices. 

Similarly, the other set of $4\times 4$ matrices have the $4$ non-zeros values in their corners. These constitute the remaining examples in the feature space $\X$. Then the label space $\Y$ is given by $4\times 4$ matrix where the four non-zeros cells have moved $15^{\circ}$ in the anti-clockwise direction and all the other cell values are $0$. These matrices are termed as anti-clockwise matrices. This is shown in \Cref{fig:arc-env2}. 

Therefore, the feature space $\X$ consists of both the clockwise and anti-clockwise matrices. At every trial, $n$ training examples and $K$ test examples are chosen randomly. Then we run all the baselines for $T$ iterations where the classification accuracy is calculated if the \LLM\ is able to predict the exact matching. This experiment is shown in \Cref{tab:arc-pcfg}.

\begin{figure}[!ht]
\centering
\begin{tabular}{cc}
\subfigure[Classification on ARC Expansion and Contraction Experiment]{\includegraphics[scale = 0.26]{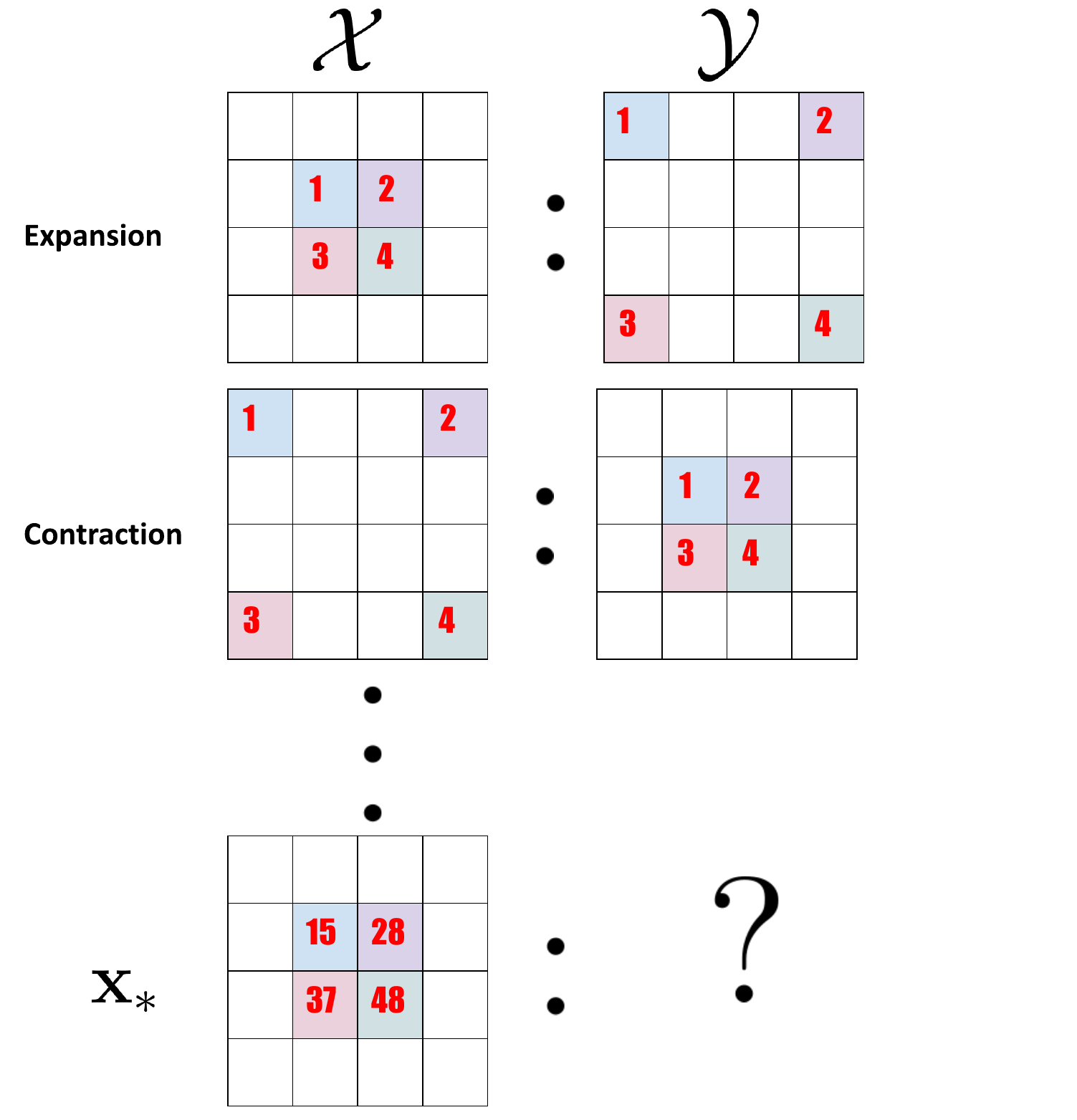}\label{fig:arc-env1}} &
\subfigure[Classification on ARC Rotation Experiment]{\includegraphics[scale = 0.24]{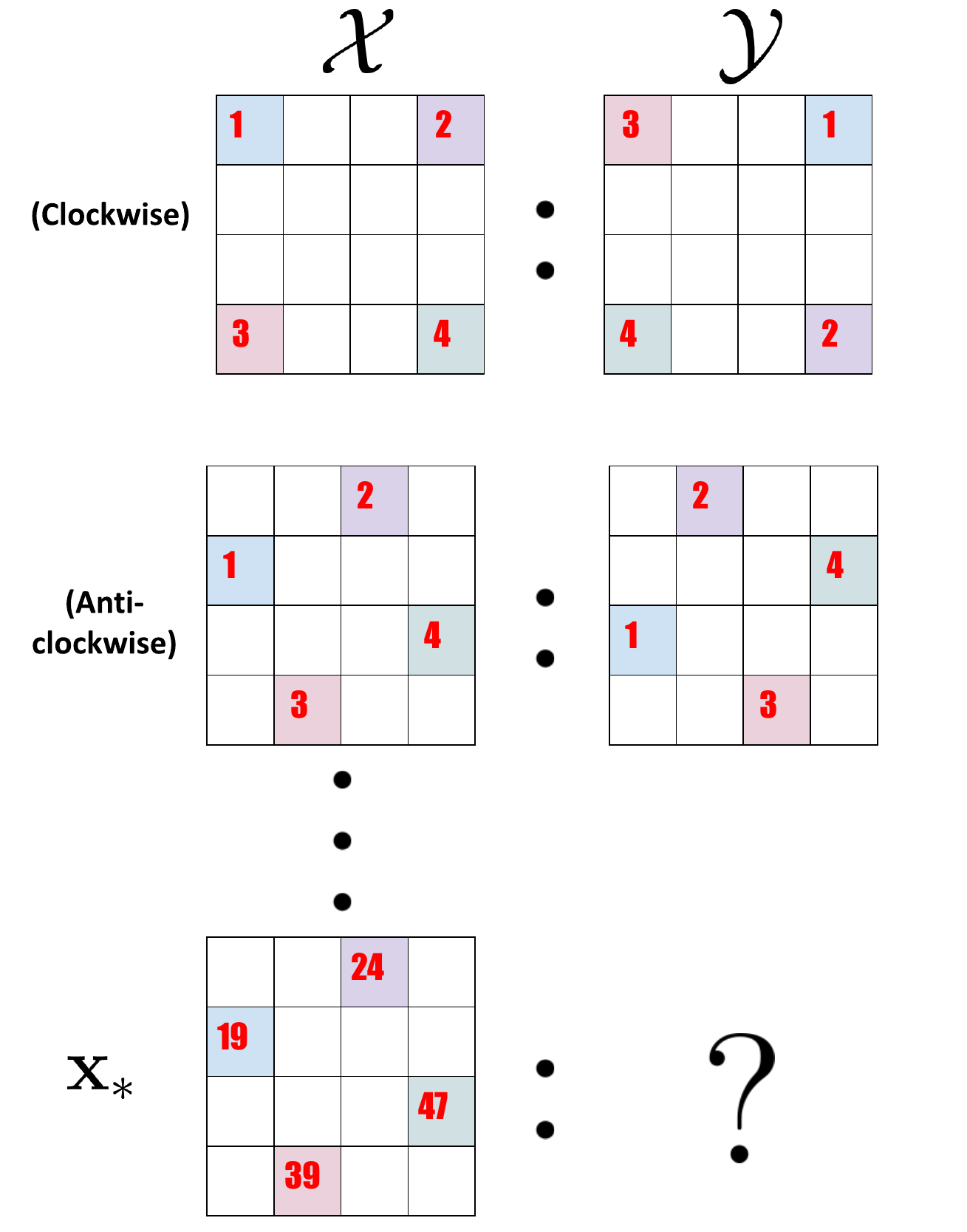}\label{fig:arc-env2}}  \\
\end{tabular}
\caption{Explanation of ARC tasks} 
\label{fig:expt-arc}
\end{figure}

\subsubsection{PCFG Experiment}
\label{app:pcfg}
In this experiment the goal is to predict the next output of a sequence. In the following experiments we choose two such tasks: (1) PCFG add-subtract experiment and (2) PCFG repeat experiment.

\textbf{(1) PCFG add-subtract experiment:} In the add-subtract experiment, there are two sets of sequence of $4$ integers. The first set of sequence of $4$ integers consists of odd integer values which constitute half the examples in the feature space $\X$. The label space $\Y$ of this sequence of $4$ odd examples is sequences of $5$ integers where the last integer is padded to the original sequence by adding one to the last odd integer. These sequences are termed as add examples. 

Similarly, the other set of examples consists of a sequence of $4$ even integer values which constitute the remaining examples in the feature space $\X$. The label space $\Y$ of this sequence of $4$ even integer examples is a sequence of $5$ integers where the last integer is padded to the original sequence by subtracting one from the last even integer. These examples are termed as even examples. This is shown in \Cref{fig:pcfg-env1}.

Therefore, the feature space $\X$ consists of both the odd and even sequence of $4$ integer value examples. At every trial, $n$ training and $K$ test examples are chosen randomly. Then we run all the baselines for $T$ iterations where the classification accuracy is calculated if the \LLM\ is able to predict the exact matching. This experiment is shown in \Cref{tab:arc-pcfg}.

\textbf{(2) PCFG repeat experiment:} In the repeat experiment, there are two sets of sequence of $4$ integers. The first set of sequence of $4$ odd integer values constitute half the examples of the feature space $\X$. The label space $\Y$ of this sequence of $4$ integers examples is a sequence of $5$ integers where the last integer is padded to the original sequence by repeating the first odd integer. These sequences are termed as odd-repeat examples. 

Similarly, the other set of examples consists of sequence of $4$ even integer values which constitute the remaining examples in the feature space $\X$. The label space $\Y$ of these sequence of $4$ integer value examples is a sequence of $5$ integers where the last integer is padded to the original sequence by repeating the second even integer. These examples are termed as even-repeat examples. This is shown in \Cref{fig:pcfg-env2}.

Therefore, the feature space $\X$ consist of both the odd-repeat and even-repeat examples. At every trial, $n$ training examples and $K$ test examples are chosen randomly. Then we run all the baselines for $T$ iterations where the classification accuracy is calculated if the \LLM\ is able to predict the exact matching. This experiment is shown in \Cref{tab:arc-pcfg}.

\begin{figure}[!ht]
\centering
\begin{tabular}{cc}
\subfigure[Classification on PCFG add-subtract Experiment]{\includegraphics[scale = 0.2]{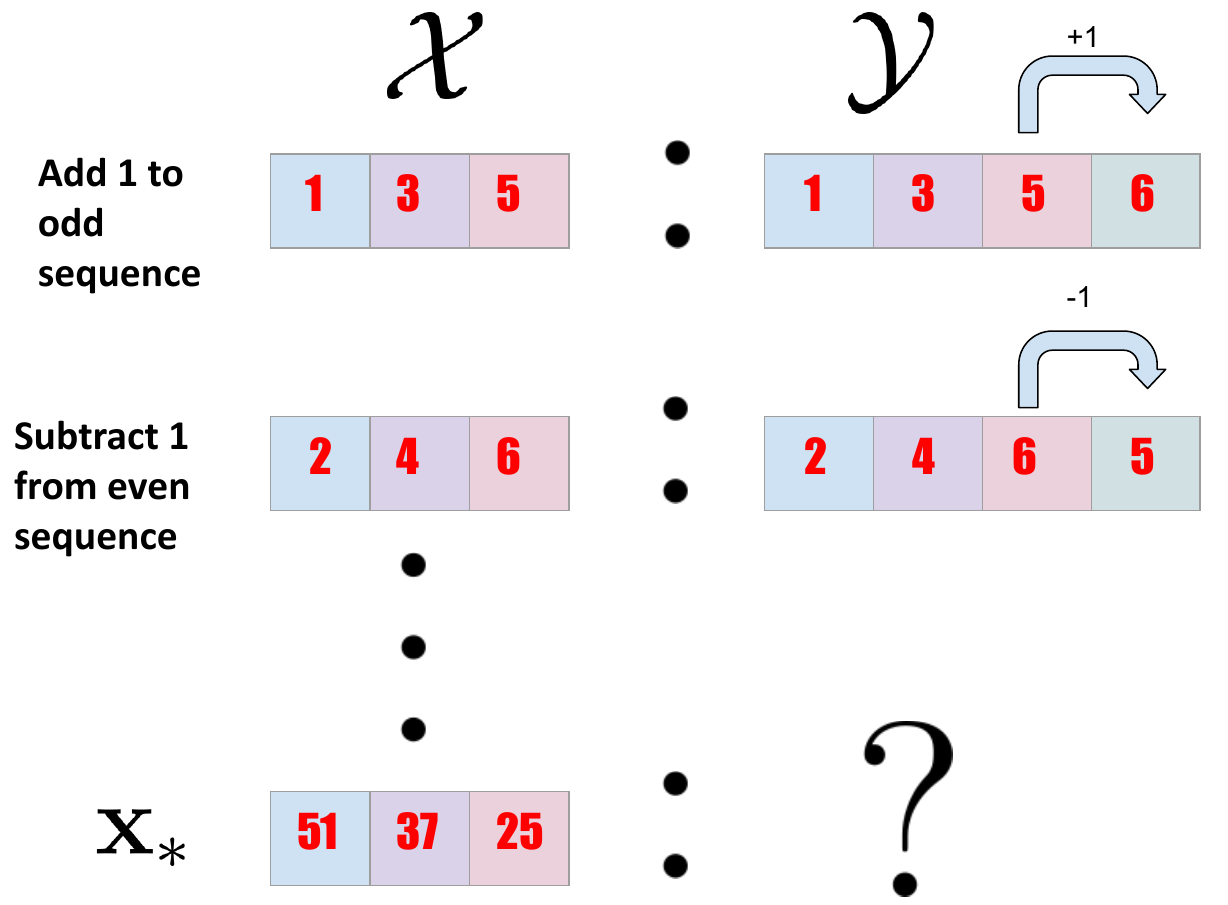}\label{fig:pcfg-env1}} &
\subfigure[Classification on PCFG repeat Experiment]{\includegraphics[scale = 0.2]{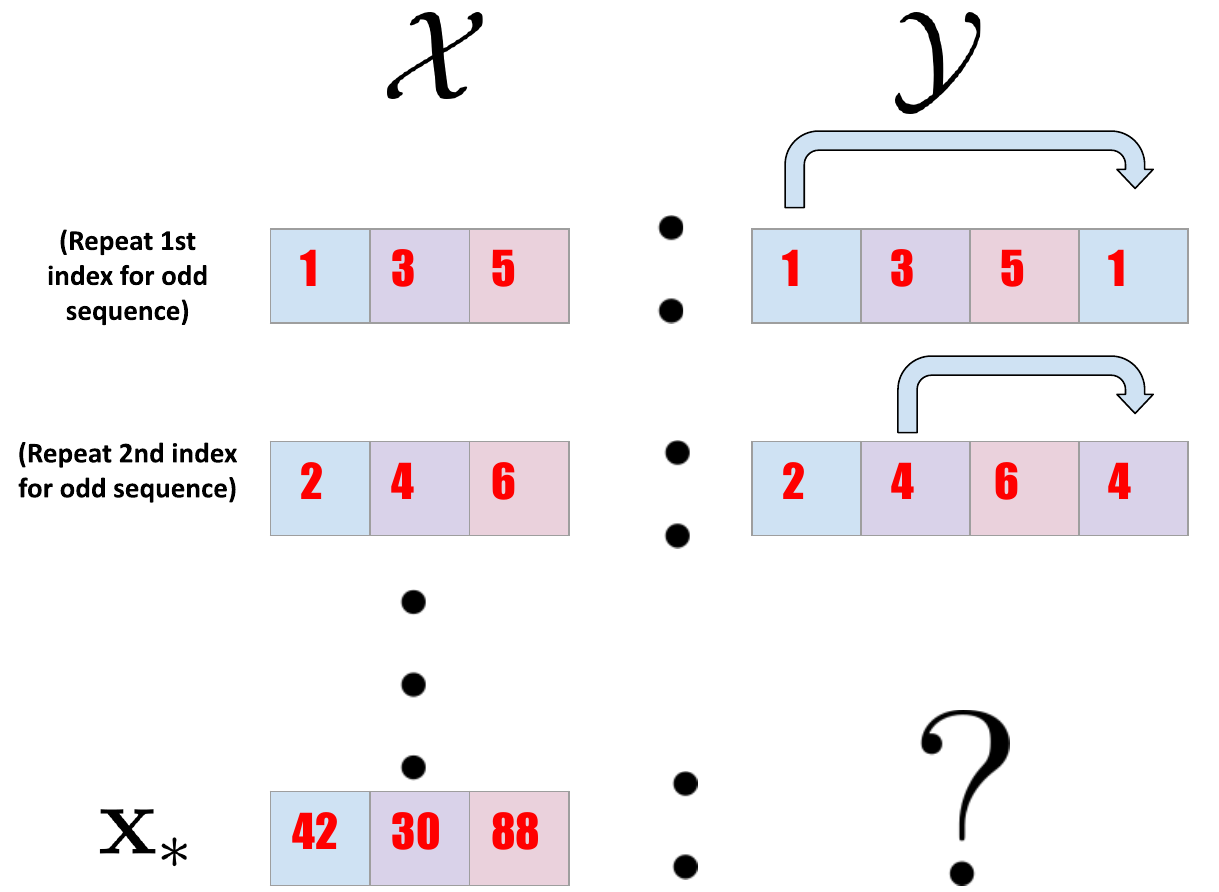}\label{fig:pcfg-env2}}  \\
\end{tabular}
\caption{Explanation of PCFG task.}
\label{fig:expt-pcfg}
\end{figure}

\section{Prompt Examples}
\label{app:prompt-ex}
\textbf{Classification Dataset Prompts :} Below we give an example of how we use the prompts to be used in the \LLM\ for the Iris misclassification task. Similar types of prompts can be found in \citet{dinh2022lift, suzgun2022challenging}. This is shown in \Cref{fig:class-prompt}. Note that since we have the feature representation of the training and test examples from the dataset, we directly use them as $\bx_i$ and $\bx_{*,k}$.

\textbf{Regression Dataset Prompts :} In \Cref{fig:reg-prompt} we give an example of a prompt for regression task in Fifa dataset. Note that since we have the feature representation of the training and test examples from the dataset, we directly use them as $\bx_i$ and $\bx_{*,k}$.


\begin{figure}[!hbt]
\centering
\begin{tabular}{cc}
\subfigure[Classification Prompt]{\hspace*{-1.5em}\includegraphics[scale = 0.23]{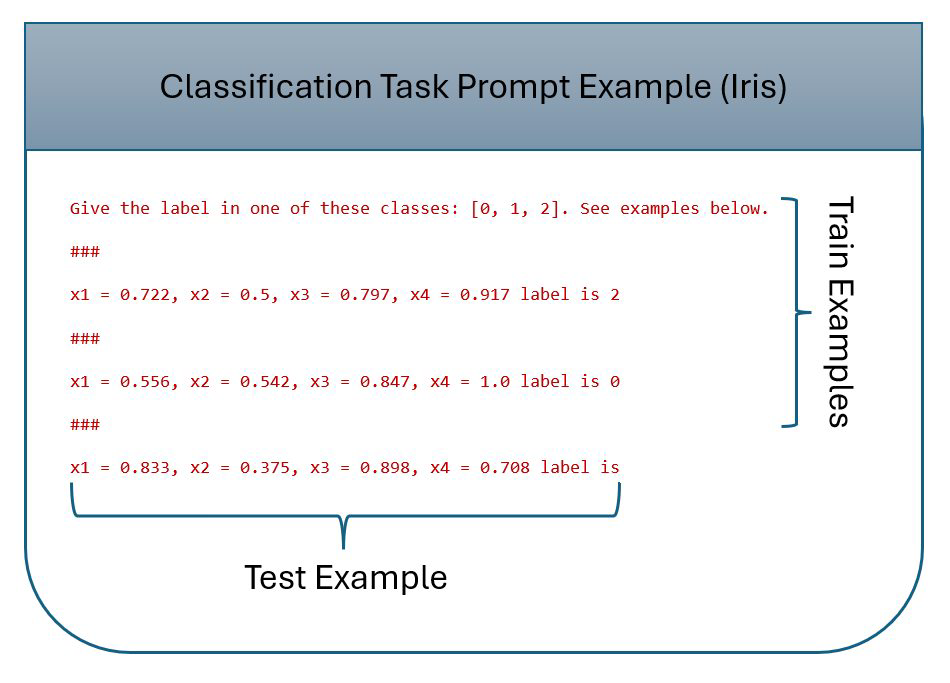}\label{fig:class-prompt}} &
\hspace*{-1.5em}\subfigure[Regression Prompt]{\includegraphics[scale = 0.23]{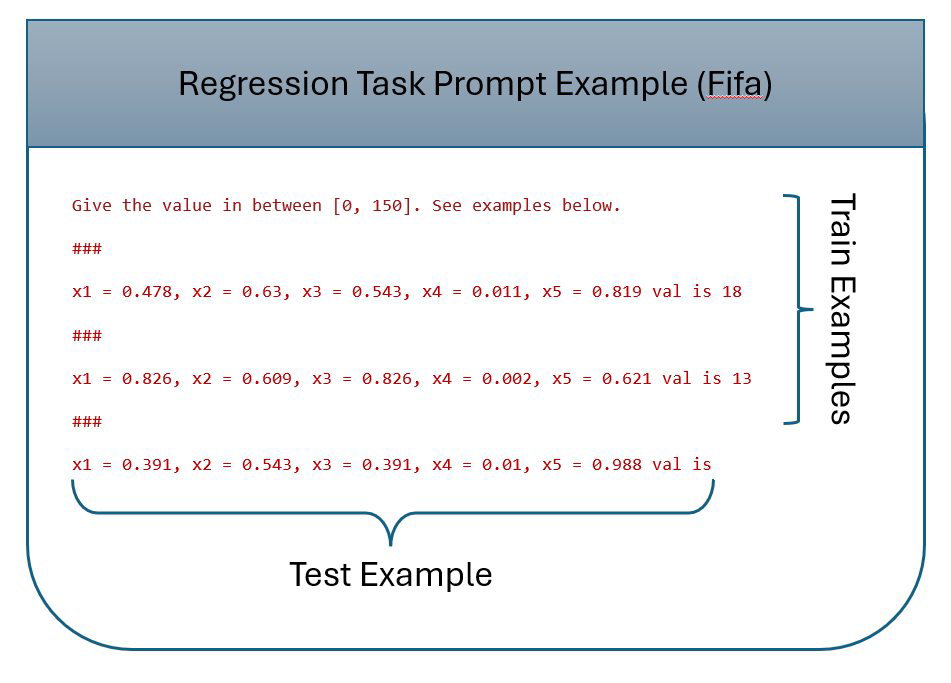} \label{fig:reg-prompt}} \\
\subfigure[Movie Prompt]{\hspace*{-1.5em}\includegraphics[scale = 0.23]{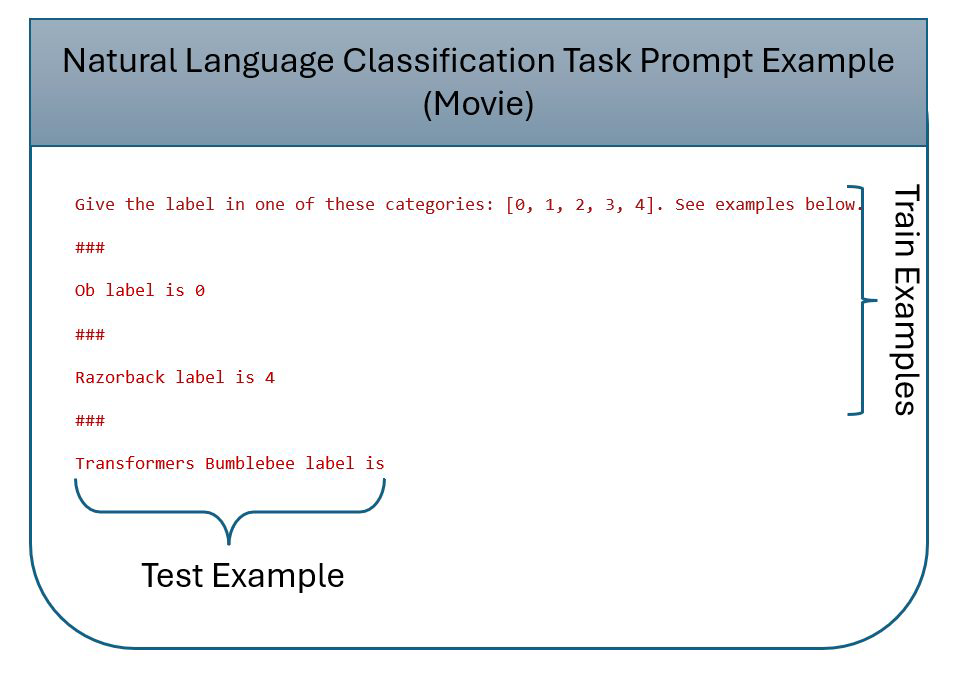}\label{fig:movie-prompt}} &
\hspace*{-1.5em}\subfigure[Entity Prompt]{\includegraphics[scale = 0.23]{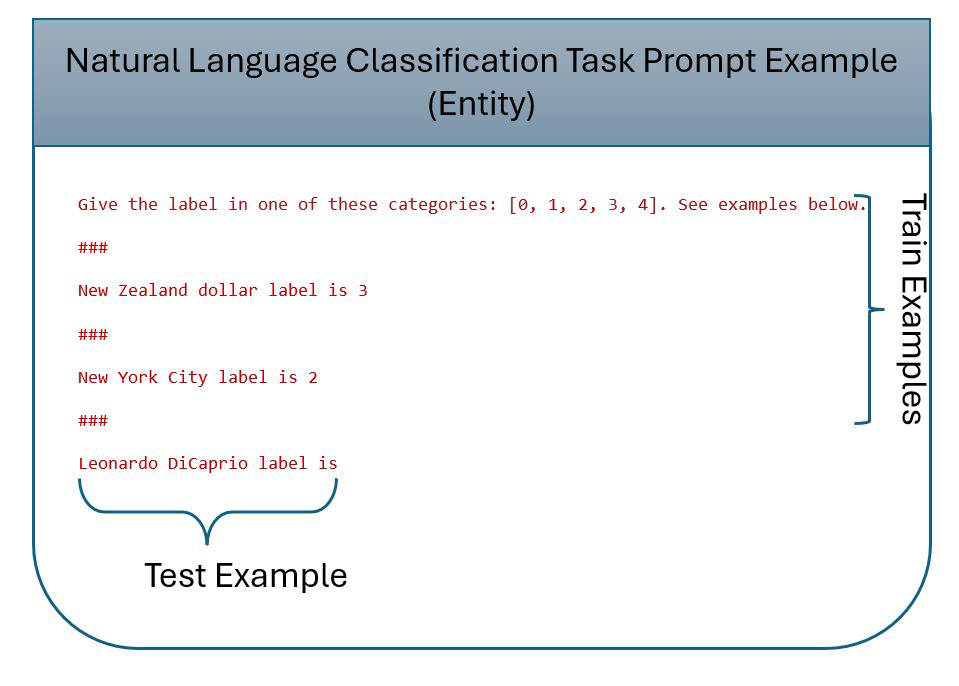} \label{fig:entity-prompt}} 
\end{tabular}
\vspace*{-1em}
\caption{Prompt examples for Classification, Regression, Movie, and Prompt}
\label{fig:expt-1}
\vspace{-0.7em}
\end{figure}

\begin{figure}[!hbt]
\centering
\begin{tabular}{cc}
\hspace*{-1.5em}\subfigure[Theme Prompt]{\includegraphics[scale = 0.23]{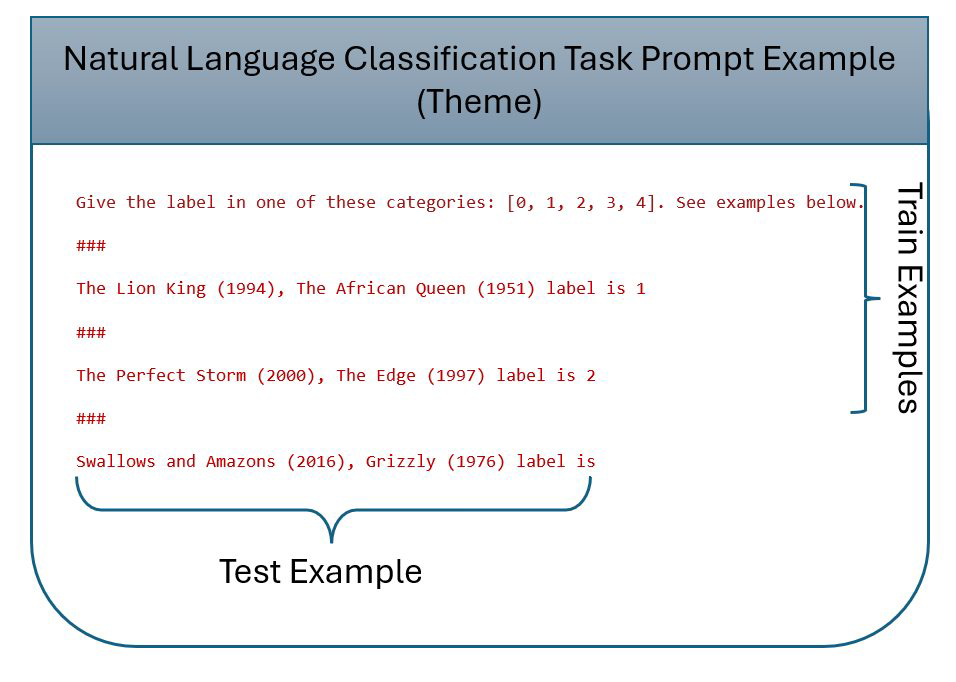} \label{fig:theme-prompt}} &
\hspace*{-1.5em}\subfigure[PCFG Prompt]{\includegraphics[scale = 0.23]{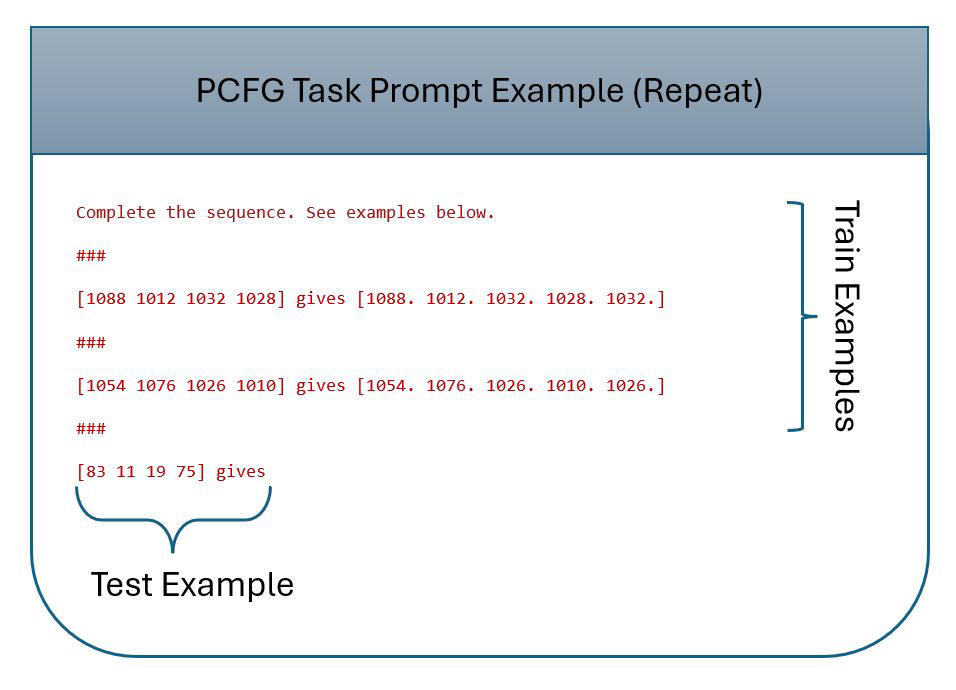} \label{fig:pcfg-prompt}} 
\end{tabular}
\vspace*{-1em}
\caption{Prompt examples for Theme and PCFG tasks}
\label{fig:expt-2}
\vspace{-0.7em}
\end{figure}
\begin{figure}[!hbt]
\centering
\begin{tabular}{c}
\hspace*{-1.5em}\subfigure[ARC Prompt]{\includegraphics[scale = 0.27]{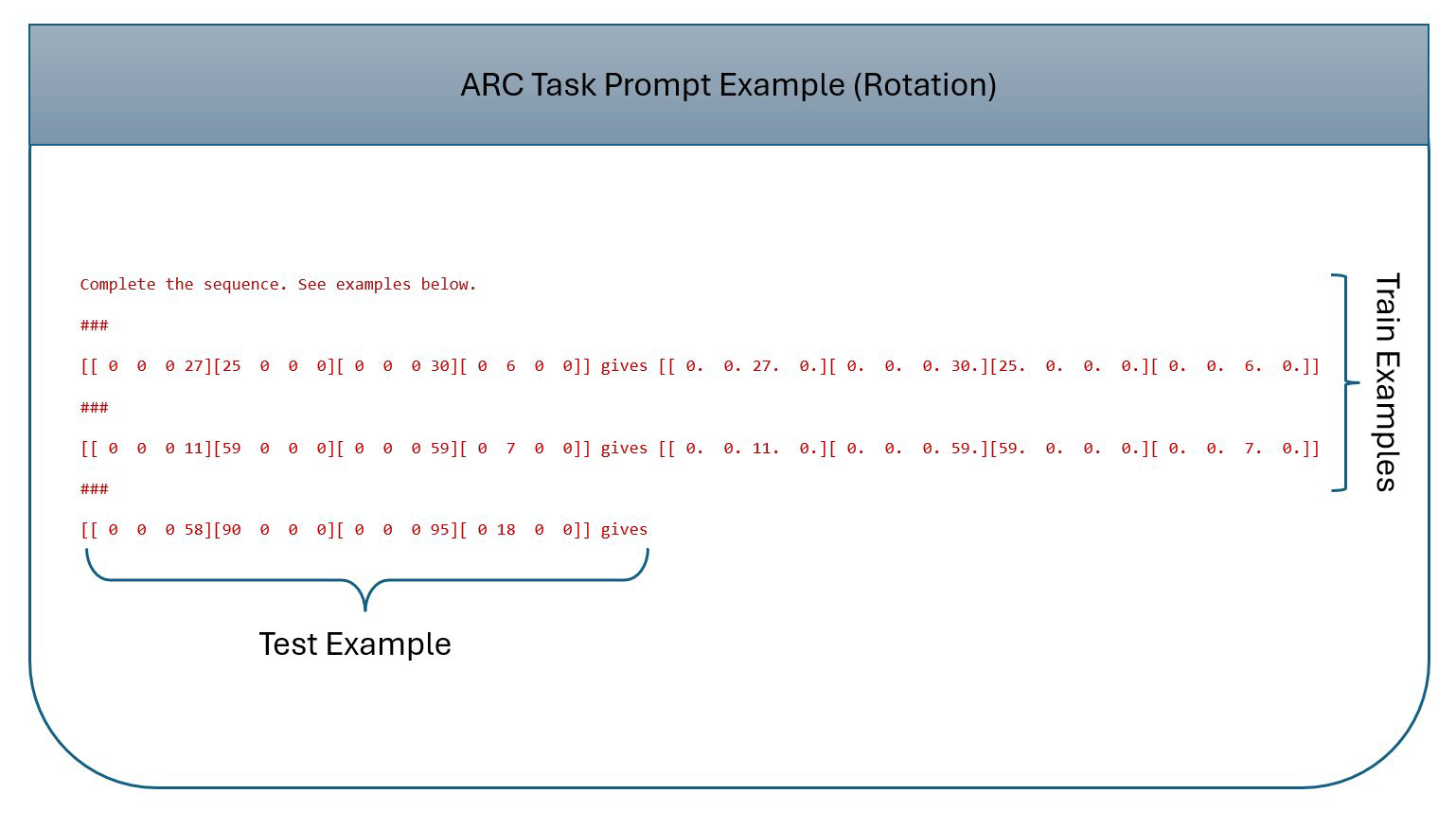} \label{fig:arc-prompt}}
\end{tabular}
\vspace*{-1em}
\caption{Prompt examples for ARC task}
\label{fig:expt-3}
\vspace{-0.7em}
\end{figure}

\textbf{Movie Theme Experiment:} We use a similar technique as in Iris dataset for this setting. The labels of the pairs of movies belong to $5$ classes as follows: good-vs-evil, man-vs-nature, redemption, Love conquers all, and coming-of-age. At every iteration, we pass $K$ pairs of movie test examples where each $\bx_{*,k}$ is a pair of movies. In the example below we have $\bx_{*,1} = \text{['Swallows and Amazon (2016), Grizzly (1976)']}$. 
Note that we feed the natural language text to the \LLM\ as prompts as shown in \Cref{fig:theme-prompt}. 
However, to run \go, \sal, and other baselines we require a featurization of these natural language prompts. We obtain a $768$ dimensional featurized representation of the pairs of movies 'Monsters Inc, Frozen (2013)' using Instructor embedding \citep{INSTRUCTOR}. This constitutes $\bx_i\in\R^{768}$ and $\bx_{*,k}\in\R^{768}$.







\textbf{Movie Name Experiment:} We use a similar technique as in Iris dataset for this setting. The labels of the movie genres belong to $5$ classes as follows: romance, horror, thriller, sport, and action. At every iteration we pass a set of test movie name examples where each $\bx_{*,k}$ is now movie name.  Note that we feed the natural language text to the \LLM\ as prompts as shown in \Cref{fig:movie-prompt}. 
However, to run \go, \sal, and other baselines we require a featurization of these natural language prompts. We obtain a $768$ dimensional featurized representation of the movie names using Instructor embedding \citep{INSTRUCTOR}. This constitutes $\bx_i\in\R^{768}$ and $\bx_{*,k}\in\R^{768}$.









\textbf{Entity Name Experiment:} The labels of the entity genres belong to $5$ classes as follows: mountains, seas, rivers, vehicles, and celebrities. At every iteration, we pass a set of test entity name examples where each $\bx_{*,k}$ is now an entity name.  Note that we feed the natural language text to the \LLM\ as prompts as shown in \Cref{fig:entity-prompt}. 
However, to run \go, \sal, and other baselines we require a featurization of these natural language prompts. Again, we obtain a $768$ dimensional featurized representation of the entity names using Instructor embedding \citep{INSTRUCTOR}. This constitutes $\bx_i\in\R^{768}$ and $\bx_{*,k}\in\R^{768}$.

\textbf{PCFG Experiment:} We show an example of this prompt in \Cref{fig:pcfg-prompt}. Here we concatenate the sequence to obtain training examples $\bx_i$ and test examples $\bx_{*,k}$. So a sequence of $4$ integers of length $4$ will be represented by $\bx_i, \bx_{*,k} \in \R^{16}$. Similarly the label $Y_i$ and $Y_{*,k}$ consist of sequence of $5$ integers of length $4$ which we concatentate to get a vector of length $\R^{20}$.

\textbf{ARC Experiment:} We show an example of this prompt in \Cref{fig:arc-prompt}. Here we vectorized the $4 \times 4$ matrix to obtain training examples $\bx_i\in \R^{16}$ and test examples $\bx_{*,k}\in \R^{16}$. Similarly the label $Y_i$ and $Y_{*,k}$ consist of  vectorized matrices of length $\R^{16}$.



\section{Table of Notations}
\label{table-notations}

\begin{table}[!tbh]
    \centering
    \begin{tabular}{|p{20em}|p{21em}|}
        \hline\textbf{Notations} & \textbf{Definition} \\\hline
        $n$ & Total unlabeled examples \\\hline
        $d$ & Dimension of the feature \\\hline
        $\cX$  & Feature set \\\hline
        $\cY$  & Label space\\\hline
        $\btheta_{*}$  & Unknown model parameter\\\hline
        $\bx_i$  & Feature of sample example $i$\\\hline
        $\bx_{*,k}$  & $k$-th test example\\\hline
        $f(\bx, \btheta_*)$  & Model \\\hline
        $Y_*$  & Label\\\hline
        $H_t = (X_\ell, Y_\ell)_{\ell \in [t - 1]}$  &  History of $t - 1$ previously labeled examples\\\hline
        $p(\cdot \mid \bx, H_t)$ & Distribution of the label of example $\bx$ conditioned on $H_t$\\\hline
        $\btheta_0$  & Prior mean of the unknown model parameter $\btheta_*$\\\hline
        $\bSigma_0$  & Prior mean of the unknown model parameter $\btheta_*$\\\hline
        $\wSigma_t
  = \left(\bSigma_0^{-1} + \sigma^{-2} \sum_{\ell = 1}^{t - 1} X_\ell X_\ell\T\right)^{-1}$ & Posterior covariance\\\hline
        $\cL_t$ & Set of labeled examples\\\hline
        $\cU_t$  & Set of unlabeled examples\\\hline
        $\wtheta_{t, i, j}$ & Posterior mean\\\hline
    \end{tabular}
    \vspace{1em}
    \caption{Table of Notations}
    \label{tab:my_label}
\end{table}


\end{document}